\newtheorem{theorem}{Theorem}
\newtheorem{lemma}{Lemma}
\let\oldtodo\todo
\renewcommand{\todo}[1]{\oldtodo[inline]{#1}}
\pgfplotsset{compat=1.18}
\tikzstyle{box} = [rectangle, draw, text width=64pt, text centered, rounded corners, minimum height=24pt]
\tikzstyle{arrow} = [thick, ->, >=stealth]
\tikzstyle{dots} = [minimum size=2em, anchor=center, inner sep=0pt, outer sep=0pt, text centered]
\tikzset{
dot/.style = {circle, fill, minimum size=#1,
              inner sep=0pt, outer sep=0pt},
dot/.default = 38pt 
}
\newcommand{\mathfontsize}{\normalsize}
\newcommand{\Ups}{\Upsilon}
\DeclareMathOperator*{\argmax}{arg\,max}
\newcounter{RomanEquationCounter}
\newcommand{\romantag}{\stepcounter{RomanEquationCounter}\tag{\rm\Roman{RomanEquationCounter}}}
\newcommand{\mbar}[3]{%
    \mathrlap{\hspace{#2}\overline{\scalebox{#1}[1]{\phantom{\ensuremath{#3}}}}}\ensuremath{#3}%
}
\newcommand{\xbar}{\mbar{0.72}{0.214em}{X}}
\newcommand{\ybar}{\mbar{0.92}{0.033em}{Y}}
\newcommand{\ttilde}{\widetilde{T}}
\newcommand{\mutilde}{\Tilde{\mu}}
\newcommand{\ktilde}{\widetilde{K}}
\newcommand{\ptilde}{\Tilde{p}}
\newcommand{\jtilde}{\Tilde{J}}
\title{Polynomial Regret Concentration of UCB for Non-Deterministic State Transitions}
\author {
    Can Cömer\textsuperscript{\rm 1}\equalcontrib,
    Jannis Blüml\textsuperscript{\rm 1,2}\equalcontrib,
    Cedric Derstroff\textsuperscript{\rm 1,2}\equalcontrib,
    Kristian Kersting\textsuperscript{\rm 1,2,3,4} 
}
\begin{document}

\maketitle

\begin{abstract}


Monte Carlo Tree Search (MCTS) has proven effective in solving decision-making problems in perfect information settings. However, its application to stochastic and imperfect information domains remains limited. This paper extends the theoretical framework of MCTS to stochastic domains by addressing non-deterministic state transitions, where actions lead to probabilistic outcomes. Specifically, building on the work of \citet{shah2020non}, we derive polynomial regret concentration bounds for the Upper Confidence Bound algorithm in multi-armed bandit problems with stochastic transitions, offering improved theoretical guarantees. Our primary contribution is proving that these bounds also apply to non-deterministic environments, ensuring robust performance in stochastic settings. This broadens the applicability of MCTS to real-world decision-making problems with probabilistic outcomes, such as in autonomous systems and financial decision-making. 
\end{abstract}

\section{Introduction}
Monte Carlo Tree Search (MCTS)~\cite{kocsis2006bandit,chaslot2008monte} has become a cornerstone for solving complex decision-making problems in perfect information settings, as demonstrated by its success in games like Chess and Go~\cite{silver2017mastering, silver2018general}. Its ability to efficiently balance exploration and exploitation through the Upper Confidence Bound (UCB) mechanism has made it one of the most effective methods in domains where the environment is fully observable and transitions between states are deterministic. The widespread success of MCTS in perfect information games has led to its adoption in various real-world applications where decision-making based on complete information is critical.

Now imagine running MCTS on the non-deterministic FrozenLake environment; see \cref{fig:exp} and appendix. Here, we ran MCTS with different numbers of simulations.
Can we trust these results? Can we provide guarantees? The issue is that guarantees for running MCTS are only known for deterministic state transitions. We extend them to stochastic domains by addressing non-deterministic state transitions.

\newlength{\mylength}
\setlength{\mylength}{0.3\columnwidth}

\begin{figure}[bt]
    \centering
    \begin{tikzpicture}
        \node (ax) {
        \begin{axis}[
            font=\small,
            height=\mylength,
            width=\mylength,
            xlabel shift={-4pt},
            tick label style={font=\scriptsize},
            grid=major,
            ymin=0.05,
            ymax=0.33,
            ytick={0.05, 0.12, 0.19, 0.26, 0.33},
            scale only axis,
            xmin=0,
            xmax=4,
            xtick={0,...,4},
            xticklabels={$2^{10}$, $2^{11}$, $2^{12}$, $2^{13}$, $2^{14}$},
            ylabel=Discounted Return,
            xlabel=Simulations $n$,
            y tick label style={
                /pgf/number format/.cd,
                    fixed,
                    fixed zerofill,
                    precision=2,
                /tikz/.cd
            },
        ]
            \addplot[blue, mark=*, error bars/.cd, y dir=both,y explicit] coordinates {
                (0, 0.07764059756359422) +- (0,0.24757914540533557 / 17.32)
                (1, 0.09351285311280264) +- (0,0.2671722350262199 / 17.32)
                (2, 0.1419061756726331)  +- (0,0.31487376988354476 / 17.32)
                (3, 0.22554183200445546) +- (0,0.37014178881214516 / 17.32)
                (4, 0.2933227153798441) +- (0, 0.3882443052777321 / 17.32)
            };
        \end{axis}};
        \begin{pgfonlayer}{background}
            \node[xshift=0.23\mylength] at (current axis) {{\transparent{0.5}\includegraphics[width=\mylength]{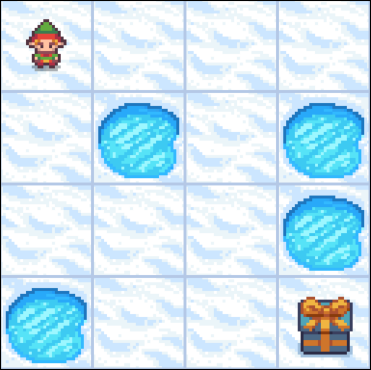}}};
        \end{pgfonlayer}
        \node[minimum height=0.7\mylength] at (1,0){};
    \end{tikzpicture}
    \caption{Running MCTS on $4 \times 4$ FrozenLake: The higher $n$, the better the performance. Is this by accident? No, we establish a polynomial regret concentration for running MCTS on stochastic and imperfect information domains. 
    }
    \label{fig:exp}
\end{figure}

Applying MCTS to stochastic and imperfect information domains remains an ongoing challenge. In such environments, agents must act based on incomplete or uncertain data.
These domains are common in theoretical settings, such as strategic games like poker, and real-world applications, such as autonomous vehicle navigation and financial decision-making. In these scenarios, the environment can behave unpredictably due to probabilistic transitions, making it difficult for traditional MCTS to perform optimally.

Various methods, such as Perfect Information Monte Carlo (PIMC)~\cite{ginsberg2001gib} or Information Set MCTS (ISMCTS)~\cite{cowling2012information}, have been proposed to address these complexities by adapting MCTS to handle imperfect information. These adaptations have proven effective in building strong baselines~\cite{bluml2023alphaze} and even for world champion-level agents like GIB~\cite{Ginsberg99}. Furthermore, recent approaches like ISMCTS-BR~\cite{timbers2022approx} compute approximate best responses in two-player zero-sum games with imperfect information, further expanding the utility of MCTS in these complex settings.
While these methods have demonstrated empirical success, their theoretical underpinnings in non-deterministic environments remain limited. Specifically, there are only a few formal guarantees on their exploration efficiency in stochastic domains, where probabilistic transitions between states introduce additional challenges. Without these guarantees, the performance of MCTS-based methods like ISMCTS or ISMCTS-BR in complex, uncertain environments may suffer from suboptimal exploration and is not guaranteed to converge, hindering their applicability in critical real-world scenarios.

Our contribution addresses this gap by extending the theoretical framework of MCTS to include non-deterministic transitions with fixed probabilities.
Building on the work of \citet{shah2020non}, we derive polynomial regret concentration bounds for UCB in settings where transitions between states are stochastic with fixed probabilities. These bounds ensure regret concentration even in uncertain environments, guaranteeing efficient exploration and enhancing the robustness of MCTS in real-world decision-making problems where probabilistic outcomes are prevalent.


We start by introducing the key concepts before presenting the theoretical framework and analysis of MCTS in stochastic domains with non-deterministic transitions. We provide a sketch of the proof demonstrating polynomial regret concentration for such settings before we discuss practical implications and related work. Finally, the paper concludes with a summary of our contributions.

\section{Preliminaries}

In this section, we introduce the key concepts and algorithms that form the foundation of our work, including the basics of MCTS, the UCB algorithm, and the role of regret concentration in decision-making processes under uncertainty.

\subsection{Multi-Armed Bandit Problems (MABs)}

The core decision-making problem MCTS addresses can be related to the classical Multi-Armed Bandit problem (MAB) \cite{thompson1933likelihood, berry1985bandit, cesa2006prediction}. In an MAB, an agent must choose one arm (action) from a set of possible arms, each associated with an unknown reward distribution. The objective is to maximize cumulative rewards over time by balancing the exploration of less-known arms with exploiting arms that have historically yielded higher rewards.


\paragraph{Upper Confidence Bound:}
At each time step $t$, an arm $I_t\in [K]$ is selected according to
{\mathfontsize{}
    \begin{equation}
    I_{t+1} = \argmax_{i\in[K]} \xbar_{i,s} + \underbrace{\beta^\frac{1}{\xi} t^\frac{\alpha}{\xi}s^{\eta-1}}_{=B_{t,s}}, \label{eq:ucb2}
\end{equation}}%
the \textit{upper confidence bound} of arm $i$ at time $t$ after it was chosen $s = T_i(t)$ times. $\xbar_{i, s}$ is the respective empirical average reward and $\beta, \xi, \eta, \alpha$ are constants. 
$B_{t,s}$ is the exploration bonus.
To avoid division by zero, we define $B_{t,0}=\infty$, which ensures that each arm is explored once first. If more than one maximum exists, one is chosen at random. 

\subsubsection{Application to Trees.}
In a game tree, where nodes represent states and edges represent actions, each tree layer can be treated as a separate MAB; as each MAB is based on the results of the MABs successive layers, complex dependencies arise, making non-leaf layers non-stationary. Successive action selection by applying a UCB variant to every such MAB is called \textit{Upper Confidence bound applied to Trees} (UCT) \cite{kocsis2006bandit}.

\subsection{Monte Carlo Tree Search (MCTS)}

MCTS~\cite{kocsis2006bandit, chaslot2008monte} is an iterative algorithm designed to solve decision-making problems by building and traversing a search tree. Each node represents a state, and each edge corresponds to a possible action. The algorithm comprises four key phases: Selection, Expansion, Simulation, and Backpropagation.

MCTS is highly effective in perfect information settings like Chess and Go, as evidenced by its use in advanced algorithms such as AlphaZero~\cite{silver2018general}. MCTS incrementally refines the estimates of state-action values through repeated simulations, improving decision quality over time.

In MCTS, each state-action pair can be seen as an arm in the MAB framework, where actions are selected using UCT.

\subsection{Regret and Regret Concentration}

In decision-making problems, \textit{regret} measures the difference between the cumulative reward obtained by the optimal policy and the reward achieved by the agent's policy over time. Minimizing regret is crucial for ensuring that the agent's performance improves as more decisions are made.

In MCTS, where each decision node can be modeled as a multi-armed bandit problem, it is important to understand how regret concentrates around its expected value. This is essential for providing performance and convergence guarantees. \citet{shah2020non} demonstrated that UCT with a polynomial exploration bonus (cf. \cref{eq:ucb2}) achieves \textit{polynomial regret concentration} in deterministic environments, where the outcomes of actions are certain.

\section{Theoretical Analysis}

\Citet{shah2020non} demonstrated that UCT with polynomial exploration bonuses leads to polynomial regret concentration in deterministic environments, where transitions between states are fully predictable. However, many real-world decision-making problems, including games with imperfect information and applications involving stochastic systems, necessitate modeling uncertainty in state transitions. In these scenarios, actions can lead to multiple possible outcomes, each with a specific probability, complicating the straightforward application of UCT.

To address this gap, we extend the theoretical guarantees of UCT to environments with non-deterministic but fixed transition probabilities. This extension is vital for domains such as poker, financial modeling, and autonomous systems, where decision-makers must navigate uncertainty due to the non-deterministic nature of the environment.
While \citet{kocsis2006bandit} have claimed that an exponential exploration bonus achieves exponential regret concentration, this only holds for the leaf layer, as it relies on independent and identically distributed reward sequences \cite{auer2002finite}. Recent work has shown that this approach does not maintain exponential regret concentration throughout the search tree \cite{shah2020non, audibert2009exploration}.
We adapt their proof and show that the UCB algorithm can handle non-deterministic transitions effectively. In this context, each action taken at a given state can lead to one of several possible next states, each governed by fixed transition probabilities. We model this situation as a multi-armed bandit problem, where each arm (action) transitions into a set of potential states, each with its own reward distribution. A formal description with necessary assumptions is provided in the next paragraph.




\paragraph{Non-deterministic non-stationary Multi-Armed Bandit Problem:} Let there be $K$ actions the agent can choose from. Each action $i \in [K] = \{1, \dots, K\}$ is connected to $K_i \geq 1$ possible successive states, which are chosen at random. The probability of sampling state $j \in [K_i]$ after choosing action $i\in [K]$ is denoted by $p^i_j\in [0,1]$ with $\sum_{j=1}^{K_i} p^{i}_j = 1, \forall i\in [K]$. Let $X_{i,t}$ denote the reward obtained by selecting action $i\in [K]$ for the $t$-th time. Further let $X^{\left(i\right)}_{j,s}$ be the reward obtained after choosing action $i\in [K]$ and sampling state $j\in [K_i]$ for the $s$-th time. Hence $X_{i,t} = \sum_{j=1}^{K_i} \chi_{\{J^i_t=j\}} X^{\left(i\right)}_{j, T^{\left(i\right)}_j(t)}$, where $\chi \in \{0,1\}$ is the indicator function, $J^i_t\in [K_i]$ is the index of the state that was sampled after action $i$ was chosen and $T^{\left(i\right)}_j(t) = \sum_{s=1}^t \chi_{\{J^i_s=j\}} $ is the number of times $j \in [K_i]$ was sampled after $i\in [K]$ was chosen until time $t \in \mathbb{N}$. Analogously, let $I_t\in [K]$ be the index of the action that was chosen at time $t\in \mathbb{N}$ and $T_i(t) = \sum_{s=1}^t \chi_{\{I_s=i\}} $ the number of times it was chosen until time $t \in \mathbb{N}$. This setup is illustrated in \cref{fig:nondet}. \\
The goal is to find the action with the highest expected average reward in the long run. For now, we assume that such an action exists and denote it as $i*\in[K]$ and that its empirical average converges in expectation to a value that we denote as $\mu_{i*}$. We will later show (\cref{thm:fixedprob}) that this assumption is indeed true. Further, we denote its distance to the converged expected empirical average of the second best action as $\Delta_{min}$, which we also formally define during the proof (cf. \cref{eq:dmin}).

Additionally, we have to make two important assumptions about the underlying MAB.

\paragraph{Assumptions:} First, we assume bounded reward sequences $X^{\left(i\right)}_{j,t} \in [-R, R], \forall i\in [K], j\in [K_i], t \in \mathbb{N}$ and some $R \in \mathbb{R_+}$. Next, we allow the reward sequences to be non-stationary, i.e., their expected value might change over time. Further, we assume the following properties for their empirical averages $\xbar^i_{j,n}=\frac{1}{n} \sum_{t=1}^n X^i_{j,t}$. For every arm $i \in [K]$ we have:
\begin{enumerate}
    \item (\textbf{Convergence}): For every $j \in [K_i]$ there exist $ \mu^i_j \in [-R,R]$ such that
    {\mathfontsize{}
    \begin{equation}
        \mu^i_{j,n}:=\mathop{\mathbb{E}}\left(\xbar^i_{j,n}\right)\xrightarrow{n\xrightarrow{} \infty} \mu^i_j
\end{equation}}%
    \item (\textbf{Concentration}): There exist some constants $\beta>1$, $\xi>0$, $\frac{1}{2} \leq \eta < 1$ such that for every $z \in \mathbb{R}$ with $z\geq 1$ and every $n \in \mathbb{N}$, we have
    {\mathfontsize{}
    \begin{equation}
    \label{eq:concentration}
    \begin{aligned}
         \mathbb{P}\!\left(n\xbar^i_{j,n} - n\mu^i_j \geq n^\eta z\right) \leq \frac{\beta}{z^\xi}, \\
        \mathbb{P}\!\left(n\xbar^i_{j,n} - n\mu^i_j \leq -n^\eta z\right) \leq \frac{\beta}{z^\xi},
    \end{aligned} 
    \end{equation}}%
    which holds for every possible next state $j \in [K_i]$.     
\end{enumerate}

Based on these assumptions, we can now formulate our theoretical main result, establishing polynomial regret concentration for non-deterministic, non-stationary
MABs, which we will then prove.

\begin{figure}[t]
    \centering
    \scalebox{0.9}{
     \begin{tikzpicture}[node distance=8mm, auto,font=\small]
        \node[dots] (rdots){};
        \node[box, above of=rdots](yk){$X^{\left(1\right)}_{K_1,T^{\left(1\right)}_{K_1}\left(T_1(t)\right)}$};
        \node[dots,above=0mm of yk](udots){\rotatebox{90}{\,$\cdots$}};
        \node[box, above=0mm of udots](y2){$X^{\left(1\right)}_{2,T^{\left(1\right)}_2\left(T_1(t)\right)}$};
        \node[box, above of=y2, yshift=0.2cm](y1){$X^{\left(1\right)}_{1,T^{\left(1\right)}_1\left(T_1(t)\right)}$};
        \node[box, below of=rdots](y21){$X^{\left(K\right)}_{1,T^{\left(K\right)}_1\left(T_K(t)\right)}$};
        \node[box, below of=y21, yshift=-0.2cm](y22){$X^{\left(K\right)}_{2,T^{\left(K\right)}_2\left(T_K(t)\right)}$};
        \node[dots, below=0mm of y22](ddots){\rotatebox{90}{\,$\cdots$}};
        \node[box, below=0mm of ddots](y2k){$X^{\left(K\right)}_{K_K,T^{\left(K\right)}_{K_K}\left(T_K(t)\right)}$};

        \node[dots, left=2.5cm of rdots](ldots){\Large\rotatebox{90}{\,$\cdots$}};
        \node[draw, above=of ldots, fill=white, dot](x1){$X_{1,T_1(t)}$};
        \node[draw, below=of ldots, fill=white, dot](xk){$X_{K,T_K(t)}$};
        
        \node[draw, left=of ldots, fill=white, circle](x){$X_t$}; 

        \draw[arrow] (x) -> (x1) node[pos=0.4,above, xshift=-4pt] {$1$};
        \draw[arrow] (x) -> (xk) node[pos=0.4,below, xshift=-4pt] {$K$};

        \draw[arrow] (x1) -> node[pos=0.65, yshift=-4pt] {$p^1_1$} (y1.west);
        \draw[arrow] (x1) -> node[pos=0.6, yshift=-2pt] {$p^1_2$} (y2.west);
        \draw[arrow] (x1) -> node[pos=0.12, yshift=-4pt] {$p^1_{K_1}$} (yk.west);

        \draw[arrow] (xk) -> node[pos=0.7, yshift=-6pt] {$p^K_1$} (y21.west);
        \draw[arrow] (xk) -> node[pos=0.3] {$p^K_2$} (y22.west);
        \draw[arrow] (xk) -> node[pos=0.12, yshift=-5pt] {$p^K_{K_K}$} (y2k.west);
    \end{tikzpicture}}
    \caption{Illustration of non-deterministic, non-stationary Multi-Armed Bandit problem. The upper confidence bound algorithm selects an action within the first layer, and a random transition is applied within the second.}
    \label{fig:nondet}
\end{figure}
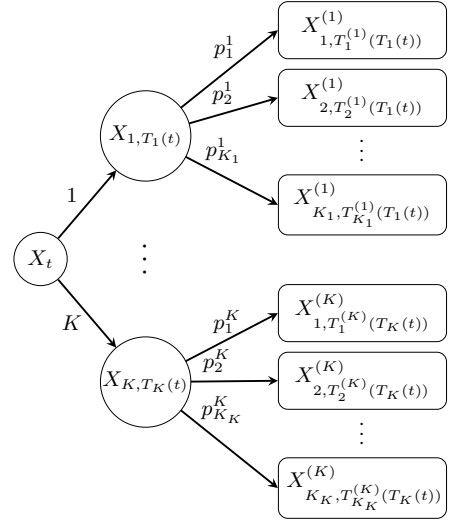

\begin{theorem}
\label{thm:main}
For a non-deterministic, non-stationary MAB satisfying properties 1 (Convergence) and 2 (Concentration), the value
$
    \xbar_n=\frac{1}{n} \sum_{i=1}^K \sum_{j=1}^{K_i} T^i_j(T_i(n)) \xbar^i_{j, T^i_j(T_i(n))}, 
$ 
obtained by applying the presented UCB algorithm with polynomial exploration bonus (\cref{eq:ucb2}) with appropriate parameter $\alpha > 2$ such that $\xi\eta\left(1-\eta\right) \leq \alpha < \xi\left(1-\eta\right)$ and randomly sampling transitions $j$ according to probabilities $\{p^i_j\in [0,1], i\in [K], j\in [K_i]\}$ satisfies properties:
\begin{enumerate}
    \item (Convergence): There exists $\mu_{i*} \in [-R, R]$ such that
    {\mathfontsize{}%
    \begin{equation}
        \mu_{n}:=\mathop{\mathbb{E}}\left(\xbar_n\right)\xrightarrow{n\xrightarrow{} \infty} \mu_{i*}
    \end{equation}}%
\item (Concentration):
    There exist constants $\beta''>1, \xi''>0, \frac{1}{2} \leq \eta'' < 1$ such that $\forall n\in \mathbb{N}, z\in \mathbb{R}$ with $n, z \geq 1$ we have
    {\mathfontsize{}%
    \begin{equation}
        \label{eq:concentration_ND}
        \begin{aligned}
            \mathbb{P}\!\left(n\xbar_n - n\mu_{i*} \geq n^{\eta''}z\right) &\leq \frac{\beta''}{z^{\xi''}}, \\
            \mathbb{P}\!\left(n\xbar_n - n\mu_{i*} \leq -n^{\eta''}z\right) &\leq \frac{\beta''}{z^{\xi''}},
        \end{aligned}
    \end{equation}}%
    where $\eta''=\frac{\alpha}{\xi\left(1-\eta\right)}$, $\xi''=\alpha-1$, and $\beta''$ depends on $R$, $K$, $ \Delta_{\mathit{min}}$, $\beta$, $\xi$, $\alpha$, $ \eta$ and $K_1,\dots,K_K$, see \cref{eq:beta''}. 
\end{enumerate}
\end{theorem}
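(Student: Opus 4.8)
\textbf{Strategy.}
For $i\in[K]$ put $\mu_i:=\sum_{j=1}^{K_i}p^i_j\,\mu^i_j\in[-R,R]$, let $i*\in[K]$ be the (assumed unique) maximiser of $i\mapsto\mu_i$, so $\mu_{i*}=\max_{i\in[K]}\mu_i$, and set
\begin{equation}\label{eq:dmin}
\Delta_{\mathit{min}}:=\min_{i\neq i*}\bigl(\mu_{i*}-\mu_i\bigr)>0.
\end{equation}
We argue in two stages. \emph{Stage A} shows that, for each $i$, the mixed per-pull reward sequence $\{X_{i,t}\}_{t\ge1}$ of \cref{fig:nondet} again satisfies Convergence (with limit $\mu_i$) and Concentration \eqref{eq:concentration}, with the \emph{same} exponents $\xi,\eta$ and a new constant $\beta'=\beta'(R,\beta,\xi,K_i)$; this collapses the chance layer and leaves an ordinary non-stationary $K$-armed bandit of exactly the kind analysed by \citet{shah2020non}. \emph{Stage B} then replays their UCB argument on that bandit, carrying Convergence and Concentration through the selection rule \eqref{eq:ucb2} and producing the stated $\eta''=\alpha/(\xi(1-\eta))$, $\xi''=\alpha-1$ and $\beta''$. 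The admissible window forces $\eta\le\eta''<1$, and $\alpha>2$ forces $\xi''=\alpha-1>1$, so the output is again of the input form and the argument can ultimately be iterated layer by layer.

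\textbf{Stage A (collapsing the chance layer).}
Fix $i$ and, for a pull count $m\in\mathbb N$, write $t^i_j:=T^i_j(m)$ and decompose
\begin{equation*}
m\,\xbar_{i,m}-m\,\mu_i
=\sum_{j=1}^{K_i}\bigl(t^i_j\,\xbar^i_{j,t^i_j}-t^i_j\,\mu^i_j\bigr)
 +\sum_{j=1}^{K_i}\mu^i_j\,\bigl(t^i_j-m\,p^i_j\bigr).
\end{equation*}
The transition draws $\{J^i_s\}_{s\ge1}$ are i.i.d.\ and independent of the reward streams, so $t^i_j$ is independent of $\xbar^i_{j,k}$ for each fixed $k\le m$; conditioning on $t^i_j$ and using $k^\eta\le m^\eta$ together with Concentration \eqref{eq:concentration} gives, for $z\ge1$,
\begin{equation*}
\mathbb P\bigl(|t^i_j\,\xbar^i_{j,t^i_j}-t^i_j\,\mu^i_j|\ge m^\eta z\bigr)
=\sum_{k\ge1}\mathbb P\bigl(|k\,\xbar^i_{j,k}-k\,\mu^i_j|\ge m^\eta z\bigr)\,\mathbb P(t^i_j=k)
\le\frac{2\beta}{z^\xi},
\end{equation*}
the key point being that this union bound over the (at most $m$) possible values of $t^i_j$ costs only the total mass $\sum_k\mathbb P(t^i_j=k)=1$ rather than a factor $m$, so no exponent degrades. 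For the second sum, $|\mu^i_j|\le R$ and each $t^i_j$ is a sum of $m$ i.i.d.\ Bernoulli$(p^i_j)$ variables, so Hoeffding gives an exponentially decaying tail for $\sum_j|t^i_j-m\,p^i_j|$ at scale $\sqrt m\le m^\eta$ --- this is where $\eta\ge\tfrac12$ enters on the chance layer --- so this sum is dominated by a constant multiple of $z^{-\xi}$. Splitting $m^\eta z$ across the two sums, a union bound over $j\in[K_i]$, and a rescaling of $z$ yield Concentration for $\{X_{i,t}\}$ with exponents $\xi,\eta$ and constant $\beta'=\beta'(R,\beta,\xi,K_i)$; Convergence follows from $\mathbb E(\xbar_{i,m})=\tfrac1m\sum_j\mathbb E\bigl(t^i_j\,\mu^i_{j,t^i_j}\bigr)$ (by conditioning), together with $t^i_j/m\to p^i_j$ a.s., $\mu^i_{j,k}\to\mu^i_j$, boundedness, and dominated convergence, giving $\mathbb E(\xbar_{i,m})\to\sum_j p^i_j\mu^i_j=\mu_i$.

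\textbf{Stage B (the UCB action layer).}
Every arm now satisfies Convergence (to $\mu_i$) and Concentration (exponents $\xi,\eta$; constant $\beta'_{\max}:=\max_i\beta'$), played by UCB with bonus $B_{t,s}=\beta^{1/\xi}t^{\alpha/\xi}s^{\eta-1}$; this is precisely the inductive step of \citet{shah2020non}, run with these constants. The window $\xi\eta(1-\eta)\le\alpha<\xi(1-\eta)$ (with $\alpha>2$) is exactly the range in which the bonus decays slowly enough that $B_{t,T_i(t)}<\Delta_{\mathit{min}}/2$ only once $T_i(t)\gtrsim t^{\eta''}$, forcing $\Omega(n^{\eta''})$ plays of each arm and fixing the $n^{\eta''}$ scale, yet fast enough that any further play of a suboptimal arm $i$ requires a polynomial-concentration deviation of $\xbar_{i,\cdot}$ or $\xbar_{i*,\cdot}$ on $\gtrsim t^{\eta''}$ samples, of probability $\lesssim t^{-\alpha}$ (since $\xi(1-\eta)\eta''=\alpha$, with a constant depending on $\Delta_{\mathit{min}},\beta,\xi$). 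Summing these over $t\le n$ is a convergent series as $\alpha>1$, which bounds $\mathbb E[T_i(n)]=O(n^{\eta''})$ and, via the UCB lemmas of \citet{shah2020non}, yields $\mathbb P\bigl(\sum_{i\neq i*}T_i(n)\ge C_1 n^{\eta''}z\bigr)\le\beta''_1 z^{-\xi''}$ with $\xi''=\alpha-1$. On that event $T_{i*}(n)\ge n-C_1 n^{\eta''}z$, so, anchoring arm $i*$'s stream at index $n$ --- where $|n\,\xbar_{i*,n}-n\,\mu_{i*}|\le n^\eta z$ with probability $\ge1-2\beta'_{\max}z^{-\xi}$ by Stage A --- and using $|k\,\xbar_{i*,k}-n\,\xbar_{i*,n}|\le R(n-k)$ and $|n-k|\,|\mu_{i*}|\le R(n-k)$ for $k\in[n-C_1 n^{\eta''}z,\,n]$, one controls $|T_{i*}(n)\xbar_{i*,T_{i*}(n)}-T_{i*}(n)\mu_{i*}|$ at scale $n^{\eta''}z$. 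Substituting both into
\begin{equation*}
n\,\xbar_n-n\,\mu_{i*}
=T_{i*}(n)\bigl(\xbar_{i*,T_{i*}(n)}-\mu_{i*}\bigr)
 +\sum_{i\neq i*}T_i(n)\bigl(\xbar_{i,T_i(n)}-\mu_{i*}\bigr),
\end{equation*}
bounding the second sum by $2R\sum_{i\neq i*}T_i(n)$, and using $\xi''=\alpha-1<\xi$ to fold the $z^{-\xi}$ tails into $z^{-\xi''}$, one obtains \eqref{eq:concentration_ND} with a constant
\begin{equation}\label{eq:beta''}
\beta''=\beta''\bigl(R,K,\Delta_{\mathit{min}},\beta,\xi,\alpha,\eta,K_1,\dots,K_K\bigr)
\end{equation}
that is polynomial in $R$, $K$, $1/\Delta_{\mathit{min}}$, $\beta$ and $K_1,\dots,K_K$ and is obtained explicitly by tracking the constants above. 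Convergence $\mathbb E(\xbar_n)\to\mu_{i*}$ follows the same way: with probability tending to $1$ almost every pull is of $i*$ (for which $\mathbb E(\xbar_{i*,m})\to\mu_{i*}$), the suboptimal contribution is $O(n^{\eta''-1})\to0$ in expectation, and off the good event the bounded rewards contribute a vanishing amount.

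\textbf{Main obstacle.}
The genuinely new step is Stage A, and within it the treatment of the \emph{random, data-dependent} count $T^i_j(m)$: it is the independence of the transition draws from the reward streams that lets the union bound over its possible values cost only unit total mass, so the exponent $\eta$ is preserved intact rather than degrading by the $1/\xi$ a naive union bound would cost; one also has to check that the multinomial fluctuation $\sum_j|T^i_j(m)-m\,p^i_j|=O(\sqrt m)$ is swamped by the $m^\eta$ scale, which is exactly why the hypothesis $\eta\ge\tfrac12$ is needed. Stage B is a careful but essentially mechanical transcription of \citet{shah2020non}: what must be verified is that their estimates invoke only Convergence/Concentration of the arms --- now furnished by Stage A --- and that the exponent arithmetic reproduces the stated window, so that $\tfrac12\le\eta\le\eta''<1$ and $\xi''=\alpha-1>1$; since the conclusion then has the same shape as the hypotheses, the theorem applies at every layer of a search tree and yields the announced guarantee for MCTS on non-deterministic domains.
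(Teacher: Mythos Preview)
Your proposal is correct and follows the same two-stage strategy as the paper: first collapse the chance layer to show that each arm's mixed stream $\{X_{i,t}\}$ again satisfies Convergence and Concentration with the \emph{same} exponents $\xi,\eta$ (the paper's \cref{thm:fixedprob}, proved via Hoeffding for the counts together with the assumed concentration of the $\xbar^i_{j,\cdot}$), and then invoke the deterministic UCB result of \citet{shah2020non} at the selection layer (the paper's \cref{thm:1}). The only cosmetic difference is in Stage~A: you use the additive split $\sum_j(t^i_j\xbar^i_{j,t^i_j}-t^i_j\mu^i_j)+\sum_j\mu^i_j(t^i_j-mp^i_j)$ and condition on $t^i_j$ using $k^\eta\le m^\eta$, whereas the paper case-splits on the event $\bigcap_j\{|\ttilde_j(n)-n\ptilde_j|\le n^\eta z/(2R\ktilde)\}$ and applies the concavity bound $n^\eta\ge\ktilde^{-1}\sum_j\ttilde_j(n)^\eta$ on the good part---both routes yield $\xi'=\xi$, $\eta'=\eta$, and a $\beta'$ polynomial in $R,\beta,K_i$.
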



\section{Proof Sketch of \Cref{thm:main}}
For readability and clarity, we provide only a brief outline of the proof, establishing polynomial regret concentration for a non-deterministic, non-stationary MAB. The full proof and \cref{thm:1,thm:fixedprob} can be found in the appendix. 

\paragraph{Key Idea:}
As illustrated in \cref{fig:nondet}, we divide our non-deterministic non-stationary MAB into two phases, one representing the selection by the UCB algorithm and one representing the random transitions. 

\paragraph{Step 1: Selection Phase}
The goal is to apply the findings of \citet{shah2020non} for deterministic transitions. To this end, we assume that the reward sequences $\{X_{i,t}: t\in \mathbb{N}\}, \forall i\in [K]$ of the intermediate layer already satisfy the convergence and concentration properties for constants $\beta', \xi', \eta'$, from which we deduce that the properties are also satisfied at the root for constants $\beta'', \xi'', \eta''$ (\cref{thm:1}).

\smallskip
\noindent\textbf{Step 2: Transition Phase}\;
The goal is to show that the assumptions of step 1 indeed follow from the assumptions of \cref{thm:main}, i.e., the rewards $\{X^i_{j,t}: j \in [K_i], t\in \mathbb{N}\}, \forall i\in [K]$ satisfy the convergence and concentration properties for constants $\beta, \xi, \eta$. First, by utilizing the independent and identically distributed nature of fixed random transitions, we derive concentration bounds for $T^i_j(n), i\in [K], j\in [K_i]$ against their expectations using Hoeffding's inequality (\cref{lemma}). We utilize these bounds to establish the desired concentration bounds for the rewards $\{X_{i,t}: t\in \mathbb{N}\}, \forall i\in [K]$ for constants $\beta', \xi', \eta'$ (\cref{thm:fixedprob}).

\smallskip
\noindent\textbf{Step 3: Combining Results}\;
Lastly, by combining the concentration results of steps 1 and 2, we obtain that the concentration at the root with constants $\beta'', \xi'', \eta''$ can be deduced by the concentration at the leaves with constants $\beta, \xi, \eta$.

\smallskip
\noindent\textbf{Conclusion:}\;
    The proof establishes that the regret concentrates polynomially around zero, even in non-deterministic environments. This guarantees that the long-term performance of the algorithm approximates the optimal policy with high probability. This concludes the proof.

\section{Practical Implications}
\Cref{thm:main} equips MCTS with convergence guarantees.
In practical terms, this means
when solving a non-deterministic problem with fixed transition probabilities and a fixed horizon, if we can estimate $\Delta_{min}$, we can calculate the exploration constants so that MCTS is guaranteed to return the optimal solution in the limit.
In the next step, we can calculate an $n$ that returns an $\epsilon$-optimal solution with a fixed probability.
This is very powerful in safety-critical domains.

We illustrate this in an empirical experiment by analyzing the regret development over increasing simulations $n$ (cf. \cref{fig:regret}, and the appendix). Shown is the regret of $i*$ of state $14$ (left of the goal) after $n$ simulations ($\mu_{i*} - \xbar_{i*, n}$) for different $n$ averaged over 10 random seeds.
We can observe that the regret not only decreases with higher $n$ but it is also more concentrated, i.e., the standard deviation over the seeds also decreases.
This outcome is expected, given that \cref{thm:main} explicitly addresses this type of behavior.

\section{Related Work}
UCB, introduced by \citet{auer2002finite}, is fundamental in addressing the exploration-exploitation trade-off in MAB problems, offering theoretical guarantees for minimizing cumulative regret. \citet{kocsis2006bandit} extended UCB to MCTS via the UCT algorithm, which has been key to MCTS's success in perfect information settings. Theoretical exploration of MCTS in non-deterministic and stochastic environments has been limited, with recent studies by \citet{dampower} examining its application in such scenarios.
\newpage
Recent advancements like AlphaZero \citep{silver2017mastering, silver2018general} and AlphaFold \cite{alphafold} have successfully combined deep learning with MCTS in perfect information games, but theoretical guarantees for exploration in complex stochastic or imperfect environments remain underexplored. 
The exploration-exploitation dynamics in stochastic MABs have also been studied by \citet{audibert2009exploration} and \citet{BubeckC12}, influencing our adaptation of UCB for probabilistic transitions. Adaptions of MCTS to stochastic or imperfect information settings have been studied, including PIMC \cite{ginsberg2001gib,long2010understanding} and ISMCTS by \citet{cowling2012information}. Yet, these works lack the theoretical guarantees for stochastic domains provided by our work.

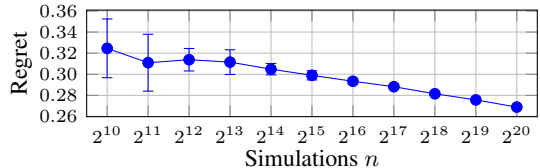
\begin{figure}[t]
    \centering
    \begin{tikzpicture}
        \begin{axis}[
            font=\small,
            height=3cm,
            width=0.9\columnwidth,
            tick label style={font=\scriptsize},
            label shift={-4pt},
            xmin=-0.5,
            xmax=10.5,
            grid=major,
            xtick={0,...,10},
            xticklabels={$2^{10}$, $2^{11}$, $2^{12}$, $2^{13}$, $2^{14}$, $2^{15}$, $2^{16}$, $2^{17}$, $2^{18}$, $2^{19}$, $2^{20}$},
            ylabel=Regret,
            xlabel=Simulations $n$,
            scaled ticks=false,
            yticklabel=\pgfkeys{/pgf/number format/.cd,fixed,precision=2,zerofill}\pgfmathprintnumber{\tick},
        ]
            \addplot[blue, mark=*, error bars/.cd, y dir=both,y explicit] coordinates {
                (0, 0.3245734338948925 ) +- (0, 0.027827663266712675 )
                (1, 0.3109363993393292 ) +- (0, 0.026970077284405657 )
                (2, 0.313770780653483 ) +- (0, 0.010654199982803312 )
                (3, 0.31149801145486294 ) +- (0, 0.011720320922891547 )
                (4, 0.3048168134834497 ) +- (0, 0.005400630848351045 )
                (5, 0.29897585390762416 ) +- (0, 0.004347817018871218 )
                (6, 0.2933685329460737 ) +- (0, 0.00306278166860901 )
                (7, 0.2881853670094512 ) +- (0, 0.0019152723560647764 )
                (8, 0.2815173406549749 ) +- (0, 0.001571483543590802 )
                (9, 0.27571023034531417 ) +- (0, 0.0008869191001255321 )
                (10, 0.2688901616681595 ) +- (0, 0.0006809278228623686 )
            };
        \end{axis}
    \end{tikzpicture}
    \caption{As predicted by \cref{thm:main}, in FrozenLake, the regret decreases and shows greater concentration as $n$ grows.}
    \label{fig:regret}
\end{figure}

\section{Discussion and Conclusion}
This paper extends the theoretical foundations of MCTS and derives polynomial regret concentration bounds for non-deterministic state transitions by building on the work of \citet{shah2020non}.
This ensures that MCTS can explore and operate effectively in stochastic environments, making these algorithms applicable to a broader range of real-world decision-making tasks.
More importantly, this means that given enough resources, MCTS is guaranteed to always converge to the optimal solution in these scenarios.
Thus, this work improves decision-making in uncertain domains by enhancing the reliability and safety of autonomous systems.
However, uncertainty and probability estimates must be handled carefully, especially in high-stakes areas like healthcare.
Theoretical guarantees are an important first step toward building trust in these systems, but rigorous testing and validation remain essential to mitigate risks.
Although our analysis addresses non-deterministic environments, it focuses on fixed transition probabilities.
This assumption may not always hold in real-world scenarios where transition dynamics often evolve over time, limiting the framework's applicability to domains where such assumptions are feasible.
Extending this work to non-stationary environments, where transition probabilities vary over time, will enhance its applicability in domains, such as those involving humans, where systems must adapt in real-time.
Another key direction is empirical validation of these theoretical findings in real-world applications, such as autonomous systems or high-frequency trading, studying the practical implications and reliability in detail.
By improving MCTS' guarantees and ability to handle uncertainty, we hope to inspire further exploration and integration of these concepts into applied AI systems across various domains.

\clearpage
\bibliography{aaai25}

\begin{thebibliography}{20}
\providecommand{\natexlab}[1]{#1}

\bibitem[{Audibert, Munos, and
  Szepesv{\'{a}}ri(2009)}]{audibert2009exploration}
Audibert, J.; Munos, R.; and Szepesv{\'{a}}ri, C. 2009.
\newblock Exploration-exploitation tradeoff using variance estimates in
  multi-armed bandits.
\newblock \emph{Theoretical Computer Science}, 410(19): 1876--1902.

\bibitem[{Auer, Cesa{-}Bianchi, and Fischer(2002)}]{auer2002finite}
Auer, P.; Cesa{-}Bianchi, N.; and Fischer, P. 2002.
\newblock Finite-time Analysis of the Multiarmed Bandit Problem.
\newblock \emph{Machine Learning}, 47(2-3): 235--256.

\bibitem[{Berry and Fristedt(1985)}]{berry1985bandit}
Berry, D.~A.; and Fristedt, B. 1985.
\newblock Bandit problems: sequential allocation of experiments (Monographs on
  statistics and applied probability).
\newblock \emph{London: Chapman and Hall}, 5(71-87): 7--7.

\bibitem[{Bl{\"{u}}ml, Czech, and Kersting(2023)}]{bluml2023alphaze}
Bl{\"{u}}ml, J.; Czech, J.; and Kersting, K. 2023.
\newblock AlphaZe{\({_\ast}\)}{\({_\ast}\)}: AlphaZero-like baselines for
  imperfect information games are surprisingly strong.
\newblock \emph{Frontiers in Artificial Intelligence}, 6.

\bibitem[{Bubeck and Cesa{-}Bianchi(2012)}]{BubeckC12}
Bubeck, S.; and Cesa{-}Bianchi, N. 2012.
\newblock Regret Analysis of Stochastic and Nonstochastic Multi-armed Bandit
  Problems.
\newblock \emph{Foundations and Trends® in Machine Learning}, 5(1): 1--122.

\bibitem[{Cesa{-}Bianchi and Lugosi(2006)}]{cesa2006prediction}
Cesa{-}Bianchi, N.; and Lugosi, G. 2006.
\newblock \emph{Prediction, learning, and games}.
\newblock Cambridge University Press.
\newblock ISBN 978-0-521-84108-5.

\bibitem[{Chaslot et~al.(2008)Chaslot, Bakkes, Szita, and
  Spronck}]{chaslot2008monte}
Chaslot, G.; Bakkes, S.; Szita, I.; and Spronck, P. 2008.
\newblock Monte-Carlo Tree Search: {A} New Framework for Game {AI}.
\newblock In Darken, C.; and Mateas, M., eds., \emph{Proceedings of the Fourth
  Artificial Intelligence and Interactive Digital Entertainment Conference,
  October 22-24, 2008, Stanford, California, {USA}}. The {AAAI} Press.

\bibitem[{Cowling, Powley, and Whitehouse(2012)}]{cowling2012information}
Cowling, P.~I.; Powley, E.~J.; and Whitehouse, D. 2012.
\newblock Information Set Monte Carlo Tree Search.
\newblock \emph{{IEEE} Transactions on Computational Intelligence and {AI} in
  Games}, 4(2): 120--143.

\bibitem[{Dam, Maillard, and Kaufmann(2024)}]{dampower}
Dam, T.~Q.; Maillard, O.-A.; and Kaufmann, E. 2024.
\newblock Power Mean Estimation in Stochastic Monte-Carlo Tree Search.
\newblock In \emph{The 40th Conference on Uncertainty in Artificial
  Intelligence}.

\bibitem[{Ginsberg(1999)}]{Ginsberg99}
Ginsberg, M.~L. 1999.
\newblock {GIB:} Steps Toward an Expert-Level Bridge-Playing Program.
\newblock In Dean, T., ed., \emph{Proceedings of the Sixteenth International
  Joint Conference on Artificial Intelligence, {IJCAI} 99, Stockholm, Sweden,
  July 31 - August 6, 1999. 2 Volumes, 1450 pages}, 584--593. Morgan Kaufmann.

\bibitem[{Ginsberg(2001)}]{ginsberg2001gib}
Ginsberg, M.~L. 2001.
\newblock {GIB:} Imperfect Information in a Computationally Challenging Game.
\newblock \emph{Journal of Artificial Intelligence Research}, 14: 303--358.

\bibitem[{Jumper et~al.(2021)Jumper, Evans, Pritzel, Green, Figurnov,
  Ronneberger, Tunyasuvunakool, Bates, {\v{Z}}{\'\i}dek, Potapenko
  et~al.}]{alphafold}
Jumper, J.; Evans, R.; Pritzel, A.; Green, T.; Figurnov, M.; Ronneberger, O.;
  Tunyasuvunakool, K.; Bates, R.; {\v{Z}}{\'\i}dek, A.; Potapenko, A.; et~al.
  2021.
\newblock Highly accurate protein structure prediction with AlphaFold.
\newblock \emph{Nature}, 596(7873): 583--589.

\bibitem[{Kocsis and Szepesv{\'{a}}ri(2006)}]{kocsis2006bandit}
Kocsis, L.; and Szepesv{\'{a}}ri, C. 2006.
\newblock Bandit Based Monte-Carlo Planning.
\newblock In F{\"{u}}rnkranz, J.; Scheffer, T.; and Spiliopoulou, M., eds.,
  \emph{Machine Learning: {ECML} 2006, 17th European Conference on Machine
  Learning, Berlin, Germany, September 18-22, 2006, Proceedings}, volume 4212
  of \emph{Lecture Notes in Computer Science}, 282--293. Springer.

\bibitem[{Long et~al.(2010)Long, Sturtevant, Buro, and
  Furtak}]{long2010understanding}
Long, J.~R.; Sturtevant, N.~R.; Buro, M.; and Furtak, T. 2010.
\newblock Understanding the Success of Perfect Information Monte Carlo Sampling
  in Game Tree Search.
\newblock In Fox, M.; and Poole, D., eds., \emph{Proceedings of the
  Twenty-Fourth {AAAI} Conference on Artificial Intelligence, {AAAI} 2010,
  Atlanta, Georgia, USA, July 11-15, 2010}, 134--140. {AAAI} Press.

\bibitem[{Shah, Xie, and Xu(2022)}]{shah2020non}
Shah, D.; Xie, Q.; and Xu, Z. 2022.
\newblock Nonasymptotic Analysis of Monte Carlo Tree Search.
\newblock \emph{Operations Research}, 70(6): 3234--3260.

\bibitem[{Silver et~al.(2018)Silver, Hubert, Schrittwieser, Antonoglou, Lai,
  Guez, Lanctot, Sifre, Kumaran, Graepel et~al.}]{silver2018general}
Silver, D.; Hubert, T.; Schrittwieser, J.; Antonoglou, I.; Lai, M.; Guez, A.;
  Lanctot, M.; Sifre, L.; Kumaran, D.; Graepel, T.; et~al. 2018.
\newblock A general reinforcement learning algorithm that masters chess, shogi,
  and Go through self-play.
\newblock \emph{Science}, 362(6419): 1140--1144.

\bibitem[{Silver et~al.(2017)Silver, Schrittwieser, Simonyan, Antonoglou,
  Huang, Guez, Hubert, Baker, Lai, Bolton, Chen, Lillicrap, Hui, Sifre, van~den
  Driessche, Graepel, and Hassabis}]{silver2017mastering}
Silver, D.; Schrittwieser, J.; Simonyan, K.; Antonoglou, I.; Huang, A.; Guez,
  A.; Hubert, T.; Baker, L.; Lai, M.; Bolton, A.; Chen, Y.; Lillicrap, T.~P.;
  Hui, F.; Sifre, L.; van~den Driessche, G.; Graepel, T.; and Hassabis, D.
  2017.
\newblock Mastering the game of Go without human knowledge.
\newblock \emph{Nature}, 550(7676): 354--359.

\bibitem[{Thompson(1933)}]{thompson1933likelihood}
Thompson, W.~R. 1933.
\newblock On the likelihood that one unknown probability exceeds another in
  view of the evidence of two samples.
\newblock \emph{Biometrika}, 25(3-4): 285--294.

\bibitem[{Timbers et~al.(2022)Timbers, Bard, Lockhart, Lanctot, Schmid, Burch,
  Schrittwieser, Hubert, and Bowling}]{timbers2022approx}
Timbers, F.; Bard, N.; Lockhart, E.; Lanctot, M.; Schmid, M.; Burch, N.;
  Schrittwieser, J.; Hubert, T.; and Bowling, M. 2022.
\newblock Approximate Exploitability: Learning a Best Response.
\newblock In Raedt, L.~D., ed., \emph{Proceedings of the Thirty-First
  International Joint Conference on Artificial Intelligence, {IJCAI} 2022,
  Vienna, Austria, 23-29 July 2022}, 3487--3493. ijcai.org.

\bibitem[{Towers et~al.(2024)Towers, Kwiatkowski, Terry, Balis, De~Cola, Deleu,
  Goul{\~a}o, Kallinteris, Krimmel, KG et~al.}]{towers2024gymnasium}
Towers, M.; Kwiatkowski, A.; Terry, J.; Balis, J.~U.; De~Cola, G.; Deleu, T.;
  Goul{\~a}o, M.; Kallinteris, A.; Krimmel, M.; KG, A.; et~al. 2024.
\newblock Gymnasium: A Standard Interface for Reinforcement Learning
  Environments.
\newblock \emph{arXiv preprint arXiv:2407.17032}.

\end{thebibliography}
\clearpage

\renewcommand{\mathfontsize}{\scriptsize}
\appendix
\section{List of Symbols}
\subsection{Latin}
\paragraph{$A(t)$} A helper function (cf. \citet{shah2020non}).
\paragraph{$B_{t,s}$} The polynomial exploration bonus for an arm/action chosen $s$ times within $t$ time steps.
\paragraph{$c_1$} A helper constant (cf. \citet{shah2020non}).
\paragraph{$c_2$} A helper constant (cf. \citet{shah2020non}).
\paragraph{$e$} The Euler constant.
\paragraph{$\mathbb{E}$} The expectation.
\paragraph{$f(\bullet)$} A function.
\paragraph{$h$} A layer in the search tree. $h \in \{2,\dots,H\}$. 
\paragraph{$H$} Finite horizon, also used as a superscript regarding the leaf level of the tree. 
\paragraph{$i$} An arm/action.
\paragraph{$i*$} The optimal arm/action.
\paragraph{$I_t$} The selected arm/action at time step $t$.
\paragraph{$j$} A sampled transition/edge.
\paragraph{$J^i_t$} The sampled transition at time step $t$ after action $i\in [K]$ was chosen.
\paragraph{$\jtilde_t$} $= J^i_t$ for some arbitrary fixed $i\in [K]$.
\paragraph{$[k]$} Short notation for $\{1,\dots,k\}$ for some $k\in \mathbb{N}$.
\paragraph{$K$} The number of arms/actions.
\paragraph{$K_i$} The number of possible successive states after arm/action $i\in [K]$ was selected.
\paragraph{$\ktilde$} $= K_i$ for some arbitrary fixed $i\in [K]$.
\paragraph{$n$} number of trials/simulations.
\paragraph{$N_p$} A constant that is a minimum for $t$ (cf. \citet{shah2020non}).
\paragraph{$\mathbb{N}$} The set of natural numbers.
\paragraph{$p^i_j$} The probability of transitioning to state $j \in [K_i]$ after choosing arm/action $i\in [K]$.
\paragraph{$\ptilde_j$} $= p^i_j$ for some arbitrary fixed $i\in [K]$.
\paragraph{$\mathbb{P}$} The probability function.
\paragraph{$R$} The reward bound.
\paragraph{$\mathbb{R}$} The set of real numbers.
\paragraph{$s$} A specific number of selections of an arm/action.
\paragraph{$t$} The current time step.
\paragraph{$T_i(t)$} The number of selections of arm/action $i$ until time step $t$.
\paragraph{$T^i_j(s)$} The number of times transition $j\in [K_i]$ is sampled after arm/action $i\in [K]$ was selected $s$ times.
\paragraph{$\ttilde_j$} $= T^i_j$ for some arbitrary fixed $i\in [K]$.
\paragraph{$X_t$} The reward obtained at time step $t$.
\paragraph{$X_{i,t}$} The reward obtained after selecting arm/action $i$ for the $t-$th time.
\paragraph{$X^i_{j,t}$} The reward obtained after selecting arm/action $i\in [K]$ and sampling transition $j\in [K_i]$ for the $t-$th time.
\paragraph{$\xbar_n$} The empirical mean of $X_t$ over $n$ time steps.
\paragraph{$\xbar_{i,n}$} The empirical mean of $X_{i,t}$ after $i$ was selected $n$ times.
\paragraph{$Y_t$} $= X_{i,t}$ for some arbitrary fixed $i\in [K]$.
\paragraph{$Y_{j,t}$} $= X^i_{j,t}$ for some arbitrary fixed $i\in [K]$.
\paragraph{$\ybar_n$} The empirical mean of $Y_t$ over $n$ time steps.
\paragraph{$\ybar_{j,n}$} The empirical mean of $Y_{j,t}$ after $j$ was sampled $n$ times.
\paragraph{$z$} The concentration parameter with $z\geq 1$.
\subsection{Greek}
\paragraph{$\alpha$} An exploration constant.
\paragraph{$\beta$} A constant at the leaf layer.
\paragraph{$\beta'$} A constant for the intermediate layers.
\paragraph{$\beta''$} A constant at the root.
\paragraph{$\beta_T$} A constant for the concentration of $T^i_j(n)$.
\paragraph{$\delta_{i,n}$} The deviation from the limit $\delta_{i,n}=  \mu_i - \mu_{i,n}  $.
\paragraph{$\Delta_{\mathit{min}}$} $\displaystyle= \min_{i \neq i*}\mu_{i*} - \mu_i$ .
\paragraph{$\gamma$} The discount factor.
\paragraph{$\eta$} A constant at the leaf layer.
\paragraph{$\eta'$} A constant for the intermediate layers.
\paragraph{$\eta''$} A constant at the root.
\paragraph{$\mu_i$} The limit of $\mathbb{E}(\xbar_{i,n})$ for $n \rightarrow \infty$.
\paragraph{$\mu_{i,n}$} The current $\mathbb{E}(\xbar_{i,n})$.
\paragraph{$\mu^i_j$} Limit of $\mathbb{E}(\xbar^i_{j,n})$ for $n \rightarrow \infty$.
\paragraph{$\mutilde_j$} $= \mu^i_j$ for some arbitrary fixed $i\in [K]$.
\paragraph{$\xi$} A constant at the leaf layer.
\paragraph{$\xi'$} A constant for the intermediate layers.
\paragraph{$\xi''$} A constant at the root.
\paragraph{$\Upsilon$} Substitute for $\sum_{i=1}^K T_i(n) \ybar_{i,T_i(n)}$.

\paragraph{$\chi_{\{\bullet\}}$} The indicator function.

%

\clearpage
\section{Proof of \Cref{thm:main}}
As illustrated in the proof sketch, we proceed in two steps; the first covers the selection layer, and the second covers the non-deterministic transition layer. 
\label{sec:proof}

\subsection{Non-stationary MAB with deterministic transitions \cite{shah2020non}}
This subsection focuses on the first layer of our problem. Assume we have a \textbf{non-stationary} MAB, where the reward sequences $\{X_{i,t}\in [-R,R]$, $t\in \mathbb{N}\},\; i\in [K]$ satisfy the following convergence and concentration properties:

\paragraph{Assumptions: Intermediate Layer} The reward sequences are allowed to be non-stationary, i.e., their expected value might change over time, but we assume the following properties for their empirical averages $\xbar_{i,n}=\frac{1}{n} \sum_{t=1}^n X_{i,t}$.
\begin{enumerate}
    \item (\textbf{Convergence}): For every $i \in [K]$ there exist $ \mu_i \in [-R,R]$ such that
    {\mathfontsize{}
    \begin{equation}
        \mu_{i,n}:=\mathop{\mathbb{E}}\left(\xbar_{i,n}\right)\xrightarrow{n\xrightarrow{} \infty} \mu_i
\end{equation}}%
    \item (\textbf{Concentration}): There exist constants $\beta'>1$, $\xi'>0$, $\frac{1}{2} \leq \eta' < 1$ such that for every $z\in \mathop{\mathbb{R}}$ with $z\geq 1$ and $n\in \mathop{\mathbb{N}}$ we have for every $i \in [K]$
    {\mathfontsize{}
    \begin{equation}
    \label{eq:concentration_prime}
    \begin{aligned}
         \mathbb{P}\!\left(n\xbar_{i,n} - n\mu_i \geq n^{\eta'} z\right) \leq \frac{\beta'}{z^{\xi'}}, \\
        \mathbb{P}\!\left(n\xbar_{i,n} - n\mu_i \leq -n^{\eta'} z\right) \leq \frac{\beta'}{z^{\xi'}}.
    \end{aligned} 
    \end{equation}}%
             
\end{enumerate} 
As introduced before, we let $i* \in [K]$ be the action with the highest reward with respect to the converged empirical average reward values, i.e., $i* = \argmax_{i\in[K]}\mu_i$. Without loss of generality, let this action be unique. Additionally, let $\delta_{i,n}:= \mu_{i,n}-\mu_i, i\in[K], n\in\mathbb{N}$ be the difference between the currently expected reward of action $i\in[K]$ and its converged value and
\begin{equation}\label{eq:dmin}
    \Delta_{min} = \min_{i \in [K] \setminus \{i*\}} \mu_{i*} - \mu_{i}
\end{equation}
the distance between the optimal reward and the reward obtained by the second best action. 

\begin{theorem}
\label{thm:1}
For a, possibly non-stationary, MAB satisfying the convergence and concentration property, the value $\xbar_n=\frac{1}{n} \sum_{i=1}^K T_i(n) \xbar_{i, T_i(n)}$ obtained by the presented UCB algorithm (cf. \cref{eq:ucb2}) with appropriate parameter $\alpha > 2$ such that $\xi'\eta'\left(1-\eta'\right) \leq \alpha < \xi'\left(1-\eta'\right)$ satisfies properties:
\begin{enumerate}
    \item (Convergence):
    {\mathfontsize{}
    \begin{equation}
    \begin{aligned}
        &\left|\mathop{\mathbb{E}}\left(\xbar_n - \mu_{i*}\right)\right|
        \leq \left|\delta_{i*,n} + \frac{1}{n} \left(2R\left(K-1\right)\right) \times \right.\\
        &\left.\left(\frac{2}{\Delta_{\mathit{min}}\beta'^{\frac{1}{\xi'}}\left(1-\eta'\right)} \eta\prime^{\frac{\alpha}{\xi'\left(1-\eta'\right)}} + \frac{2}{\alpha-2} + 1\right)\right|, \\ 
        &\text{i.e.,} \lim_{n\xrightarrow{}\infty} \mathop{\mathbb{E}}\left(\xbar_n\right) = \mu_{i*}.
    \end{aligned}
    \end{equation}}%
    \item (Concentration):
    There exist constants $\beta''>1, \xi''>0, \frac{1}{2} \leq \eta'' < 1$ such that $\forall n\in \mathbb{N}, z\in \mathbb{R}$ with $n,z \geq 1$ we have
    {\mathfontsize{}
    \begin{equation}
    \begin{aligned}
        \mathbb{P}\!\left(n\xbar_n - n\mu_{i*} \geq n^{\eta''}z\right) \leq \frac{\beta''}{z^{\xi''}}, \\
        \mathbb{P}\!\left(n\xbar_n - n\mu_{i*} \leq -n^{\eta''}z\right) \leq \frac{\beta''}{z^{\xi''}},
    \end{aligned}
    \end{equation}}%
    where $\eta'' = \frac{\alpha}{\xi'\left(1-\eta'\right)}$, $\xi''=\alpha-1$ and $\beta''$ depends on $R, K, \Delta_{\mathit{min}}, \beta', \xi', \alpha, \eta'$ (cf. \citet{shah2020non}, and \cref{eq:beta''}).
\end{enumerate}
\end{theorem}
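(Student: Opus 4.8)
Write $R_n:=\sum_{i\neq i*}T_i(n)=n-T_{i*}(n)$ for the number of suboptimal pulls. Using only boundedness $|X_{i,t}|\le R$, the sum $T_{i*}(n)\xbar_{i*,T_{i*}(n)}=\sum_{s\le T_{i*}(n)}X_{i*,s}$ differs from $n\xbar_{i*,n}=\sum_{s\le n}X_{i*,s}$ by at most $R\,R_n$, while $|(n-T_{i*}(n))\mu_{i*}|\le R\,R_n$ and $\bigl|\sum_{i\neq i*}T_i(n)(\xbar_{i,T_i(n)}-\mu_{i*})\bigr|\le 2R\,R_n$; since $n\xbar_n=\sum_{i\in[K]}T_i(n)\xbar_{i,T_i(n)}$, these give the crude but decisive estimate
\begin{equation*}
  \bigl|\,n\xbar_n-n\mu_{i*}\,\bigr|\;\le\;\bigl|\,n\xbar_{i*,n}-n\mu_{i*}\,\bigr|\;+\;4R\,R_n .
\end{equation*}
So Convergence will follow from $\mathbb{E}[R_n]=o(n)$, and Concentration from a polynomial upper tail on $R_n$ together with the intermediate bound \cref{eq:concentration_prime} applied to $\xbar_{i*,n}$ at the \emph{deterministic} index $n$. (For $K=1$ there is nothing to prove.)

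\textbf{Controlling $T_i(n)$; Convergence.}
Fix $i\neq i*$. The pivotal observation is that $\eta''=\alpha/\bigl(\xi'(1-\eta')\bigr)$ is exactly the exponent making $B_{t,s}=(\beta')^{1/\xi'}t^{\alpha/\xi'}s^{\eta'-1}$ bounded along $s\asymp t^{\eta''}$: with $c:=\bigl(4(\beta')^{1/\xi'}/\Delta_{\mathit{min}}\bigr)^{1/(1-\eta')}$, once $T_i(t)\ge c\,t^{\eta''}$ we have $B_{t,T_i(t)}\le\Delta_{\mathit{min}}/4$. Since $\delta_{i,t}\to 0$, for $t$ beyond some constant $N_p$ the drifts are $\le\Delta_{\mathit{min}}/8$, so selecting such a ``saturated'' suboptimal arm via \cref{eq:ucb2} forces $\xbar_{i,T_i(t)}\ge\mu_i+\Delta_{\mathit{min}}/4$ or $\xbar_{i*,T_{i*}(t)}\le\mu_{i*}-\Delta_{\mathit{min}}/4$, each of which, by \cref{eq:concentration_prime}, has probability at most $\beta'(s^{1-\eta'}\Delta_{\mathit{min}}/4)^{-\xi'}$ at sample size $s$. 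The admissible window $\xi'\eta'(1-\eta')\le\alpha<\xi'(1-\eta')$ with $\alpha>2$ forces $\xi'(1-\eta')>\alpha>2$, so the decay exponent $\xi'(1-\eta')>1$ makes these tails summable, and summing over $s$ from the saturation threshold yields $\mathbb{E}[T_i(n)]\le c\,n^{\eta''}+O(1)$. Hence $|\mathbb{E}(\xbar_n-\mu_{i*})|\le\tfrac1n\mathbb{E}\bigl|n\xbar_{i*,n}-n\mu_{i*}\bigr|+\tfrac{4R}{n}\mathbb{E}[R_n]\to 0$, proving Convergence; keeping the optimal arm's own drift $\delta_{i*,n}$ explicit instead of bounding it through $\mathbb{E}|n\xbar_{i*,n}-n\mu_{i*}|$, and tracking the constants of the $T_i(n)$ bound, recovers the displayed inequality.

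\textbf{Tail of $T_i(n)$; Concentration.}
For Concentration I strengthen the above to $\mathbb{P}(T_i(n)\ge c\,n^{\eta''}z)\lesssim z^{-(\alpha-1)}$ for all $n,z\ge1$ (a bounded range of $z$, where the bound is vacuous, being absorbed into the constant). Look at the round $\tau\le n$ at which $T_i$ first reaches $m:=\lceil c\,n^{\eta''}z\rceil$; the saturation estimate (the extra factor $z^{\eta'-1}\le1$ only helps) forces either $\xbar_{i,m-1}\ge\mu_i+\Delta_{\mathit{min}}/4$ -- of probability $\lesssim(n^{\eta''}z)^{-(1-\eta')\xi'}\le z^{-(\alpha-1)}$ by \cref{eq:concentration_prime} -- or $\xbar_{i*,T_{i*}(\tau)}\le\mu_{i*}-\Delta_{\mathit{min}}/4$. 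In the second case the optimal arm cannot have been left under-sampled: were $T_{i*}(\tau)$ small, $B_{\tau,T_{i*}(\tau)}$ would exceed $2R+\Delta_{\mathit{min}}$ and \cref{eq:ucb2} would select $i*$ rather than $i$; since $\tau\ge T_i(\tau)=m-1\asymp n^{\eta''}z$ this forces $T_{i*}(\tau)\gtrsim(n^{\eta''}z)^{\eta''}$, so summing $\mathbb{P}(\xbar_{i*,s'}\le\mu_{i*}-\Delta_{\mathit{min}}/4)$ over the admissible values $s'\gtrsim(n^{\eta''}z)^{\eta''}$ of $T_{i*}(\tau)$ -- a convergent tail since $\xi'(1-\eta')>1$ -- gives $\lesssim\bigl((n^{\eta''}z)^{\eta''}\bigr)^{1-(1-\eta')\xi'}=(n^{\eta''}z)^{\eta''-\alpha}\le z^{-(\alpha-\eta'')}\le z^{-(\alpha-1)}$, using $\eta''(1-\eta')\xi'=\alpha$ and $\eta''<1$. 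A union over the $K-1$ suboptimal arms gives $\mathbb{P}(R_n\ge c'n^{\eta''}z)\lesssim z^{-(\alpha-1)}$. Feeding this into the first display, $\mathbb{P}(n\xbar_n-n\mu_{i*}\ge n^{\eta''}z)\le\mathbb{P}(R_n\ge\tfrac{n^{\eta''}z}{8R})+\mathbb{P}(|n\xbar_{i*,n}-n\mu_{i*}|\ge\tfrac12 n^{\eta''}z)$; the first term is $\lesssim z^{-(\alpha-1)}$, and since $\tfrac12 n^{\eta''}z=n^{\eta'}\cdot\tfrac12 n^{\eta''-\eta'}z$ with $\eta''\ge\eta'$, \cref{eq:concentration_prime} bounds the second by $2\beta'\,2^{\xi'}z^{-\xi'}\le 2\beta'\,2^{\xi'}z^{-(\alpha-1)}$ (as $\xi'>\alpha-1$). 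Collecting all constants ($R,K,\Delta_{\mathit{min}},\beta',\xi',\alpha,\eta'$) yields the Concentration claim with $\eta''=\alpha/(\xi'(1-\eta'))$, $\xi''=\alpha-1$, and $\beta''$ as quantified in the statement; the lower tail is identical.

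\textbf{Main obstacle.}
The crux is the sharp tail for $T_i(n)$, and within it the ``$\xbar_{i*}$ atypical'' branch: a naive union over rounds over-counts arm $i*$'s atypical events, since the same empirical average $\xbar_{i*,s'}$ recurs over a whole block of consecutive rounds, and the fix is to exploit the \emph{monotone growth of $B_{t,s}$ in $t$} to certify that $T_{i*}$ is already of order $(n^{\eta''}z)^{\eta''}$ at the very round where a saturated suboptimal arm is wrongly chosen. It is precisely here that the full parameter window $\xi'\eta'(1-\eta')\le\alpha<\xi'(1-\eta')$ -- equivalently $\tfrac12\le\eta'\le\eta''<1$ -- and the margin $\alpha>2$ are all consumed, and turning these estimates into a bound uniform over every $n\ge1$ and $z\ge1$ (not merely asymptotic) is what necessitates the case split on whether $z$ is below a fixed constant.
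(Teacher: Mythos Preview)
The paper does not prove \cref{thm:1}; its proof reads in full ``See \citet{shah2020non}.'' The theorem is imported as a known result, and the paper's own contribution lies in \cref{thm:fixedprob} and its combination with \cref{thm:1} into \cref{thm:main}. Your sketch therefore has nothing in the present paper to be compared against; what can be said is that it follows the expected skeleton of the Shah et al.\ argument --- a saturation threshold $\asymp n^{\eta''}$ beyond which pulling a suboptimal arm forces an atypical empirical mean, giving $\mathbb{E}[T_i(n)]\lesssim n^{\eta''}$, then a tail upgrade $\mathbb{P}(T_i(n)\ge c\,n^{\eta''}z)\lesssim z^{-(\alpha-1)}$ via a deterministic lower bound on $T_{i*}$ at the critical round --- and the parameter arithmetic (why $\eta''=\alpha/(\xi'(1-\eta'))$, why $\xi'(1-\eta')>\alpha>2$ yields summability, why $\xi''=\alpha-1$) is correct. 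One device worth flagging: you route the optimal arm's fluctuation through $n\xbar_{i*,n}$ at the \emph{deterministic} index $n$, paying $O(R\,R_n)$ for the unobserved samples $X_{i*,s}$ with $T_{i*}(n)<s\le n$; this cleanly sidesteps a union over the random index $T_{i*}(n)$ and relies on the reward sequence $\{X_{i*,s}\}_{s\ge1}$ being defined for all $s$ irrespective of the algorithm's selections --- which is implicit in the assumption that \cref{eq:concentration_prime} holds for every $n$.
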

\begin{proof}
    See \citet{shah2020non}.
\end{proof}

\subsubsection{Non-deterministic Transitions with Fixed Probabilities}
In this subsection, we observe the second layer of our non-deterministic, non-stationary MAB, representing the random transitions. To improve readability, we fix some arbitrary $i\in[K]$ and define $s:=T_i(t)\in \mathbb{N}$, $Y_s:=X_{i,s}$,  $\ktilde := K_i$ and for $j\in [\ktilde]$ we let $Y_{j, \ttilde_j(s)} := X^i_{j, T^i_j(s)}$, $\ttilde_j(s) := T_j^i(s),$ $\mutilde_j:=\mu_j^i$, $\ptilde_j:=p_j^i,$ $\jtilde_t := J_t^i$. This setup is illustrated in \cref{fig:nondetonly}.


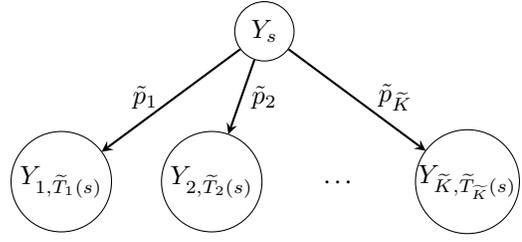
\begin{figure}[tb]
    \centering
    \begin{tikzpicture}[node distance = 2cm,
                        auto]
        \tikzstyle{arrow} = [thick,->,>=stealth]
        \node[draw, fill=white, circle](x){$Y_s$}; 
        \node[draw, below of=x, xshift=-0.7cm, fill=white, dot](x2){$Y_{2,\ttilde_2(s)}$};
        \node[draw, left of=x2, fill=white, dot](x1){$Y_{1,\ttilde_1(s)}$};
        \node[right of=x2, xshift=-0.3cm](dots){\dots};
        \node[draw, right of=dots, xshift=-0.3cm, fill=white, dot](xk){$Y_{\ktilde,\ttilde_{{\ktilde}}(s)}$};
    
        \draw[arrow] (x) -> (x1);
        \draw[arrow] (x) -> (x2);
        \draw[arrow] (x) -> (xk);

        \node[below=0.2cm of x](p2){$\ptilde_2$};
        \node[left=1cm of p2](p1){$\ptilde_1$};
        \node[right=1.1cm of p2](pk){$\ptilde_{\ktilde}$};
    \end{tikzpicture}
    \caption{A non-deterministic transition layer where the agent transitions to state $j \in [\ktilde]$ with probability $\ptilde_j$ to receive reward $Y_{j,\ttilde_j(s)}$.}
    \label{fig:nondetonly}
\end{figure}



Note that now $\{\chi_{\{\jtilde_t = j\}}\colon t\geq 1\}$ are independent and identically distributed for $j\in[\ktilde]$, since the transition probabilities are fixed, and $\mathbb{P}\!\left(\jtilde_t=j\right) = \ptilde_j, \forall t\in \mathbb{N}$.

\paragraph{Assumptions: Leaf Layer}
Again, we allow the reward sequences $\{Y_{j,t}\colon t\geq 1\},\; j\in [\ktilde]$ to be non-stationary, but they fulfill the following properties, that we already know from \cref{thm:main}:

\begin{enumerate}
    \item (\textbf{Convergence}): For every $j \in [\ktilde]$ there exist $ \mutilde_j \in [-R,R]$ such that
    {\mathfontsize{}
    \begin{equation}
        \mutilde_{j,n}:=\mathop{\mathbb{E}}\left(\ybar_{j,n}\right)\xrightarrow{n\xrightarrow{} \infty} \mutilde_j
\end{equation}}%
    \item (\textbf{Concentration}): There exist constants $\beta>1$, $\xi>0$, $\frac{1}{2} \leq \eta < 1$ such that for every $z\in \mathop{\mathbb{R}}$ with $z\geq 1, n\in \mathbb{N}$ and $j \in [\ktilde]$, we have
    {\mathfontsize{}
    \begin{equation}
    \begin{aligned}
        \mathbb{P}\!\left(n\ybar_{j,n} - n\mutilde_j \geq n^\eta z\right) \leq \frac{\beta}{z^\xi}, \\
        \mathbb{P}\!\left(n\ybar_{j,n} - n\mutilde_j \leq -n^\eta z\right) \leq \frac{\beta}{z^\xi}.
    \end{aligned}
    \end{equation}}%
\end{enumerate}

\begin{theorem}
\label{thm:fixedprob}
For a layer with, possibly non-stationary, reward sequences $\{Y_{j,t}\colon t\geq 1\}, j\in [\ktilde]$ satisfying properties $1$ and $2$, the value $\ybar_n=\frac{1}{n} \sum_{i=1}^{\ktilde} \ttilde_j(n) \ybar_{j,\ttilde_j(n)}$ obtained by randomly sampling edges based on the fixed probabilities $\ptilde_1,\dots,\ptilde_{\ktilde} \in [0,1]$ with $\sum_{j=1}^{\ktilde} \ptilde_j = 1$ satisfies properties:
\begin{enumerate}
    \item (Convergence): For $n \xrightarrow{} \infty$ we have
    {\mathfontsize{}
    \begin{equation}
        \mathop{\mathbb{E}}\left(\ybar_n\right) \xrightarrow{} \mutilde:= \sum_{j=1}^{\ktilde} \ptilde_j \mutilde_j
\end{equation}}%
    \item (Concentration): There exist constants $\beta'>1, \xi'>0, \frac{1}{2} \leq \eta' < 1$ such that $\forall n\in \mathbb{N}, z\in R$ with $n,z \geq 1$ we have
    {\mathfontsize{}
    \begin{equation}
    \begin{aligned}
    \mathbb{P}\!\left(n\ybar_n - n\mu \geq n^{\eta'}z\right) \leq \frac{\beta'}{z^{\xi'}},\\
    \mathbb{P}\!\left(n\ybar_n - n\mu \leq -n^{\eta'}z\right) \leq \frac{\beta'}{z^{\xi'}},
    \end{aligned} \label{eq:concy}
    \end{equation}}%
    where $\xi' = \xi, \eta'=\eta$ and $\beta' = \beta 2^{\xi+2}R^{\xi}\ktilde^{\xi+1}$.
\end{enumerate}%
\end{theorem}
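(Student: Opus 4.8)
# Proof Proposal for Theorem~\ref{thm:fixedprob}

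\textbf{Overall strategy.} The plan is to reduce the concentration of $\ybar_n$ to the concentration of the individual leaf sequences $\ybar_{j,n}$ by conditioning on the transition counts $\ttilde_j(n)$. Write
$n\ybar_n = \sum_{j=1}^{\ktilde} \ttilde_j(n)\,\ybar_{j,\ttilde_j(n)}$ and $n\mu = \sum_{j=1}^{\ktilde} \ptilde_j n\,\mutilde_j$ (using $\sum_j \ptilde_j = 1$). I would decompose the deviation as
\[
n\ybar_n - n\mu = \underbrace{\sum_{j=1}^{\ktilde}\bigl(\ttilde_j(n)\ybar_{j,\ttilde_j(n)} - \ttilde_j(n)\mutilde_j\bigr)}_{\text{(A): reward fluctuation}} \;+\; \underbrace{\sum_{j=1}^{\ktilde}\bigl(\ttilde_j(n) - \ptilde_j n\bigr)\mutilde_j}_{\text{(B): count fluctuation}}.
\]
Term (B) is controlled purely by how far the i.i.d.\ sampling counts $\ttilde_j(n)$ stray from their means $\ptilde_j n$, which is exactly the content of Lemma~\ref{lemma} (Hoeffding applied to the i.i.d.\ indicators $\chi_{\{\jtilde_t = j\}}$): with probability at least $1 - \beta_T/z^{\xi}$ or similar, $|\ttilde_j(n) - \ptilde_j n| \leq c\sqrt{n}\,(\text{something in } z)$, and since $|\mutilde_j| \le R$ this contributes at most $O(R\ktilde\sqrt{n}\cdot z)$, which is absorbed into $n^{\eta}z$ because $\eta \ge \tfrac12$.

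\textbf{Handling term (A).} Conditioned on the event that each $\ttilde_j(n) = m_j$ for fixed values $m_j$ with $\sum_j m_j = n$, the quantity $m_j\ybar_{j,m_j} - m_j\mutilde_j$ satisfies, by the Concentration assumption on the leaf layer, $\mathbb{P}(m_j\ybar_{j,m_j} - m_j\mutilde_j \ge m_j^{\eta} w) \le \beta/w^{\xi}$ for $w \ge 1$. Since $m_j \le n$, we have $m_j^{\eta} \le n^{\eta}$, so each summand is bounded in the $n^{\eta}$-scale; a union bound over the $\ktilde$ leaves, together with splitting the target deviation $n^{\eta}z$ into $\ktilde$ equal pieces $\tfrac{n^{\eta}z}{\ktilde}$ (this is where the $\ktilde^{\xi}$ factor in $\beta'$ comes from — substituting $w = z/(2\ktilde)$ or similar forces a $(2\ktilde)^{\xi}$ blow-up, plus one more $\ktilde$ from the union bound itself and the $2^{\xi+2}R^{\xi}$ from merging with term (B) and the two-sided bound), yields the stated $\beta' = \beta\,2^{\xi+2}R^{\xi}\ktilde^{\xi+1}$. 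The key technical point is that the bound on term (A) must hold \emph{uniformly} over the conditioning values $m_j$, which it does precisely because the leaf concentration inequality holds for every $n$ with the same constants and $m_j^\eta \leq n^\eta$.

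\textbf{Convergence (property 1).} For the first property I would take expectations: $\mathbb{E}(\ybar_n) = \tfrac1n\sum_j \mathbb{E}(\ttilde_j(n)\ybar_{j,\ttilde_j(n)})$. Using $\mathbb{E}(\ttilde_j(n)) = \ptilde_j n$ and the fact that $\ybar_{j,m} \to \mutilde_j$ in expectation while being uniformly bounded by $R$, a dominated-convergence / Cesàro-type argument (the empirical fraction $\ttilde_j(n)/n \to \ptilde_j$ a.s.\ by the strong law, and $\mutilde_{j,m} \to \mutilde_j$) gives $\mathbb{E}(\ybar_n) \to \sum_j \ptilde_j\mutilde_j = \mu$. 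Alternatively, this follows as a direct corollary of property~2 once it is established, since polynomial concentration around $\mu$ with $\eta < 1$ forces the mean to converge to $\mu$.

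\textbf{Main obstacle.} I expect the main difficulty to be making term (A) rigorous despite the subtle dependence between $\ttilde_j(n)$ and the reward sequence $Y_{j,\cdot}$: one cannot simply treat $\ttilde_j(n)$ as independent of the rewards it indexes. The clean fix is to note that the Concentration assumption is stated for \emph{every} $n$, so by a union bound over all possible realized values $m_j \in \{0,1,\dots,n\}$ — or more economically, over the high-probability range of $m_j$ guaranteed by Lemma~\ref{lemma} — one obtains a bound that is valid no matter which value $\ttilde_j(n)$ takes, at the cost of at most an extra polynomial-in-$n$ factor that is dominated by adjusting $z$. Threading this union bound so that the final constant stays clean ($\beta'$ independent of $n$) and the exponent stays exactly $\xi' = \xi$, $\eta' = \eta$ is the delicate bookkeeping step.
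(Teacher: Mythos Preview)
Your decomposition into reward fluctuation (A) and count fluctuation (B), together with Hoeffding (Lemma~\ref{lemma}) for (B) and the leaf concentration assumption for (A), is precisely the paper's route. The paper packages it as a case split --- either some $|\ttilde_j(n)-n\ptilde_j|$ exceeds $\tfrac{n^\eta z}{2R\ktilde}$ (bounded by the lemma), or all counts are close, in which case (B) contributes at most $\tfrac{n^\eta z}{2}$ deterministically and the remaining half of the budget is spent on (A) via a union bound over the $\ktilde$ leaves --- but the substance is identical, and the constants line up with your sketch. The paper also derives convergence as a byproduct of concentration, exactly as you suggest in your alternative for property~1.

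The genuine gap is your ``main obstacle''. There is \emph{no} dependence between $\ttilde_j(n)$ and the reward sequence $\{Y_{j,t}\}_{t\ge 1}$: in this layer the transition indices $\jtilde_1,\jtilde_2,\ldots$ are i.i.d.\ with fixed law $(\ptilde_1,\ldots,\ptilde_{\ktilde})$, independently of all rewards. Hence $\ttilde_j(n)$, being a function of the $\jtilde_t$'s alone, is independent of $\{Y_{j,t}\}$, and one may simply condition:
\[
\mathbb{P}\!\left(\ttilde_j(n)\bigl(\ybar_{j,\ttilde_j(n)}-\mutilde_j\bigr)\ge \ttilde_j(n)^\eta w\right)
=\sum_{m\ge 0}\mathbb{P}(\ttilde_j(n)=m)\,\mathbb{P}\!\left(m\ybar_{j,m}-m\mutilde_j\ge m^\eta w\right)\le \frac{\beta}{w^\xi},
\]
with no extra factor at all. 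This is what the paper uses (without comment). Your proposed workaround --- a union bound over all $m_j\in\{0,\ldots,n\}$, or over the high-probability range of width $O(n^\eta z)$ --- would instead multiply the bound by a factor growing in $n$, and that factor \emph{cannot} be absorbed into an $n$-independent $\beta'$ while keeping $\xi'=\xi$, $\eta'=\eta$; it is not ``dominated by adjusting $z$''. So the argument as you wrote it does not close; once you replace the union-bound detour by the one-line independence observation, it does, and you recover the stated $\beta'=\beta\,2^{\xi+2}R^\xi\ktilde^{\xi+1}$.

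One small bookkeeping difference worth noting: to pass from the $n^\eta$-scale to the per-leaf scale in (A), the paper uses the concavity inequality $n^\eta\ge \tfrac{1}{\ktilde}\sum_{j}\ttilde_j(n)^\eta$ (since $\sum_j\ttilde_j(n)=n$ and $x\mapsto x^\eta$ is concave), applying the leaf concentration directly at scale $\ttilde_j(n)^\eta$ with $w=z/(2\ktilde)$. Your route via $m_j^\eta\le n^\eta$ achieves the same bound and is equally valid.
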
%
\noindent Before we prove this theorem, note the following lemma.

\begin{lemma}
\label{lemma}
For every $\eta \in \left[\nicefrac{1}{2}, 1\right)$ and any finite $\xi>0$ there exist $\beta_T>1$ large enough such that $\forall n\in \mathbb{N}, z\in \mathbb{R}$ with $n,z \geq 1$ and $j\in [\ktilde]$ we have

{\mathfontsize{}
\begin{equation}
\begin{aligned}
    \mathbb{P}\!\left(\ttilde_j(n) - n \ptilde_j \geq n^{\eta}z\right) \leq \frac{\beta_T}{z^{\xi}}, \\
    \mathbb{P}\!\left(\ttilde_j(n) - n \ptilde_j \leq -n^{\eta}z\right) \leq \frac{\beta_T}{z^{\xi}}.
\end{aligned}
\end{equation}}%

\end{lemma}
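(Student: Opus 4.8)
The plan is to exploit the observation recorded just above \cref{thm:fixedprob}: because the transition probabilities are fixed, the indicators $\{\chi_{\{\jtilde_t=j\}}\colon t\geq 1\}$ are i.i.d.\ with $\mathbb{P}(\jtilde_t=j)=\ptilde_j$, so $\ttilde_j(n)=\sum_{t=1}^{n}\chi_{\{\jtilde_t=j\}}$ is a sum of $n$ independent $[0,1]$-valued random variables with mean exactly $n\ptilde_j$. This puts us squarely in the regime of Hoeffding's inequality, and the lemma is essentially a repackaging of it.

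First I would apply Hoeffding's inequality to these summands: for every deviation $t>0$,
\begin{equation}
    \mathbb{P}\!\left(\ttilde_j(n)-n\ptilde_j \geq t\right) \leq \exp\!\left(-\frac{2t^2}{n}\right),
\end{equation}
with the symmetric statement for the lower tail. Substituting $t=n^{\eta}z$ turns the right-hand side into $\exp(-2n^{2\eta-1}z^2)$. Next, since $\eta\geq\nicefrac12$ we have $2\eta-1\geq 0$, hence $n^{2\eta-1}\geq 1$ for all $n\geq 1$; together with $z\geq 1$ this yields the bound $\mathbb{P}(\ttilde_j(n)-n\ptilde_j\geq n^{\eta}z)\leq\exp(-2z^2)$, uniformly in $n$ and $j$. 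The remaining step is to trade this sub-Gaussian tail for the required polynomial one: the function $z\mapsto z^{\xi}\exp(-2z^2)$ is continuous on $[1,\infty)$ and tends to $0$ as $z\to\infty$, so it attains a finite maximum $C_\xi$ there (an elementary calculation gives $C_\xi=e^{-2}$ when $\xi\leq 4$ and $C_\xi=(\xi/(4e))^{\xi/2}$ when $\xi>4$). Taking $\beta_T:=\max\{2,C_\xi\}$, so that $\beta_T>1$, we get $\exp(-2z^2)\leq C_\xi/z^{\xi}\leq\beta_T/z^{\xi}$ for all $z\geq 1$, which is exactly the claimed upper-tail bound; the lower-tail bound follows identically from the symmetric form of Hoeffding's inequality.

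I do not anticipate a genuine obstacle, since the statement is a direct consequence of Hoeffding's inequality applied to an i.i.d.\ Bernoulli sum. The only points that need a little care are (i) using $\eta\geq\nicefrac12$ to strip the $n$-dependence out of the exponent, so that a single constant works for all $n$, and (ii) checking that $\beta_T$ can be chosen independently of $j$ — which is automatic, because Hoeffding's bound does not involve $\ptilde_j$ at all.
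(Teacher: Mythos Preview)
Your proposal is correct and mirrors the paper's own proof almost verbatim: both compute $\mathbb{E}[\ttilde_j(n)]=n\ptilde_j$, apply Hoeffding's inequality to the i.i.d.\ indicators to get $e^{-2n^{2\eta-1}z^2}$, use $\eta\geq\nicefrac12$ to drop the $n$-dependence, and then dominate $e^{-2z^2}$ by $\beta_T/z^{\xi}$ via $\beta_T\geq\max_{z\geq 1}z^{\xi}e^{-2z^2}$. Your explicit constant $(\xi/(4e))^{\xi/2}$ coincides with the paper's $\xi^{\xi/2}2^{-\xi}e^{-\xi/2}$, and your extra care about the boundary case $\xi\leq 4$ and the $j$-independence of the bound are minor refinements of the same argument.
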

\begin{proof} 
As a starting point, we observe that
{\mathfontsize{}
\begin{equation}
\begin{aligned}    
    \mathop{\mathbb{E}}\left(\ttilde_j(n)\right) & = \mathop{\mathbb{E}}\left(\sum_{t=1}^n \chi_{\{\jtilde_t=j\}}\right) \\
    &= \sum_{t=1}^n \mathop{\mathbb{E}}\left(\chi_{\{\jtilde_t=j\}}\right) \\
    &= \sum_{t=1}^n \mathbb{P}\!\left(\jtilde_t=j\right) \\
    &= n\ptilde_j.
\end{aligned}
\end{equation}}%
Then, we can deduce exponential concentration by applying Hoeffdings inequality, which we will weaken into a polynomial concentration of the desired form. We have
{\mathfontsize{}
\begin{equation}
\begin{aligned}
    &\mathbb{P}\!\left(\ttilde_j(n) - n \ptilde_j \geq n^{\eta}z\right) \\
    &\leq e^{-2 \frac{n^{2\eta} z^2}{n}}   && (\text{Hoeffdings ineq.})  \\
    &= \frac{1}{ e^{2n^{2\eta-1} z^2}} \\
    &\leq \frac{1}{ e^{2 z^2}  }   && (2\eta -1 \geq 0) \\  
    &\leq  \frac{\beta_T}{ z^{\xi}  }. && \left(\forall \beta_T \geq \max_{z \geq 1}\frac{z^{\xi}}{e^{2z^2}}\right)
\end{aligned}
\end{equation}}%
Hence, for any finite $\xi > 0$, we can choose $\beta_T$ large enough, such that the desired inequalities hold. Specifically any
\begin{equation}
    \beta_T \geq \xi^{\frac{\xi}{2}} 2^{-\xi} e^{-\frac{\xi}{2}}   
\end{equation}
will suffice.
The other direction
follows analogously.\\
Note that for large enough $z$ any $\beta_T>1$ suffices, since $z^{\xi} = \mathcal{O}(e^{2z^2})$ for any finite $\xi>0$.
\end{proof}

\begin{proof}\textit{(\Cref{thm:fixedprob})}
From \cref{lemma}, we can deduce for $j\in [\ktilde]$ that there exists some $\beta_T$ such that
{\mathfontsize{}
\begin{equation}
\mathbb{P}\!\left(\ttilde_j(n) - n\ptilde_j \geq \frac{n^\eta z}{2R\ktilde}\right) \leq \frac{\beta_T}{\left(\frac{z}{2RK}\right)^\xi} = \frac{\beta_T 2^\xi R^\xi K^\xi}{z^\xi},   
\end{equation}}%
\noindent which for $z \geq 2RK$ directly follows from \cref{lemma} and for $z \leq 2RK$ trivially follows from the right hand side being greater than $1$ since $\frac{\left(2RK\right)^\xi}{z^\xi} \geq 1$ and $\beta_T >1 $.

Additionally, we can assume without loss of generality that we can choose the same constants for the bounds of $\ttilde_j(n)$ and $\ybar_{j,n}$, i.e., $\beta = \beta_T$ since we can choose the larger one and both bounds will hold. We will, therefore, omit the subscript $T$ to improve readability and use only $\beta$.

Next, we show the \textbf{concentration property} of \cref{thm:fixedprob}. Due to space constraints, we introduce the new variable $\Ups = \sum_{j=1}^{\ktilde} \ttilde_j(n) \ybar_{j,\ttilde_j(n)}$. We start with the left-hand side of \cref{eq:concy}. To obtain 2 new terms that we can treat separately, we transform it as follows:
{\mathfontsize{}
\begin{align*}
    \text{conc.} &= \mathbb{P}\!\left(n\ybar_n - n\mutilde \geq n^{\eta}z\right) \\
    &= \mathbb{P}\!\left(n \frac{1}{n} \Ups - n \sum_{j=1}^{\ktilde} \ptilde_j \mutilde_j \geq n^{\eta}z\right)  \\
    &\leq \mathbb{P}\!\left(
    \begin{aligned}
       &\left\{\Ups - n \sum_{j=1}^{\ktilde} \ptilde_j \mutilde_j \geq n^{\eta}z \right\} \\
       &\cap \bigcup_{j=1}^{\ktilde} \left\{\left|\ttilde_j(n)-n\ptilde_j\right|\geq \frac{n^\eta z}{2R\ktilde}\right\}
    \end{aligned}
    \right)  \romantag\label{i} \\
    &\phantom{=} \; + \mathbb{P}\!\left(
    \begin{aligned}
        \left\{\Ups - n \sum_{j=1}^{\ktilde} \ptilde_j \mutilde_j \geq n^{\eta}z\right\} \\
        \cap \bigcap_{j=1}^{\ktilde} \left\{\left|\ttilde_j(n)-n\ptilde_j\right|\leq \frac{n^\eta z}{2R\ktilde}\right\}
    \end{aligned}
    \right),  \romantag\label{ii}
\end{align*}}%
where the last inequality follows from the union bound. 
Now, we bound each term separately. For (\ref{i}) again the union bound and \cref{lemma} yield
{\mathfontsize{}
\begin{equation}
\begin{aligned}
    (\ref{i}) &=\mathbb{P}\!\left(
    \begin{aligned}
       &\left\{\Ups - n \sum_{j=1}^{\ktilde} \ptilde_j \mutilde_j \geq n^{\eta}z \right\} \\
       &\cap \bigcup_{j=1}^{\ktilde} \left\{\left|\ttilde_j(n)-n\ptilde_j\right|\geq \frac{n^\eta z}{2R\ktilde}\right\}
    \end{aligned}
    \right) \\
    &\leq \sum_{j=1}^{\ktilde} \mathbb{P}\!\left(
    \begin{aligned}
        &\left\{\Ups - n \sum_{j=1}^{\ktilde} \ptilde_j \mutilde_j \geq n^{\eta}z \right\} \\
        &\cap \left\{\left|\ttilde_j(n)-n\ptilde_j\right|\geq \frac{n^\eta z}{2R\ktilde}\right\}
    \end{aligned}
    \right)  \\
    &\leq \sum_{j=1}^{\ktilde} \mathbb{P}\!\left( \left\{\left|\ttilde_j(n)-n\ptilde_j\right|\geq \frac{n^\eta z}{2R\ktilde}\right\}\right)  \\
    &\leq 2\ktilde \frac{\beta 2^\xi R^\xi \ktilde^\xi}{z^\xi}.
\end{aligned}
\end{equation}}%
Before we establish the bound for (\ref{ii}), first note the following inequality
{\mathfontsize{}
\begin{equation}
    n^\eta \geq \frac{\sum_{j=1}^{\ktilde} \ttilde_j(n)^{\eta}}{\ktilde}. \label{ineq1}
\end{equation}
}%
This follows from $n = \sum_{j=1}^{\ktilde} \ttilde_j(n)$ and that the function $f(x)=x^\eta$ for positive $x \in \mathbb{R_+}$ and $\frac{1}{2} \leq \eta < 1$ is concave and monotonically increasing. Hence, we have 
{\mathfontsize{}
\begin{equation}
\begin{aligned}
    f\left(\sum_{j=1}^{\ktilde} \ttilde_j(n)\right) &\geq f\left(\sum_{j=1}^{\ktilde} \frac{1}{\ktilde} \ttilde_j(n)\right) &&(\text{mon. inc.}) \\
     &\geq \frac{\sum_{j=1}^{\ktilde} f\left(\ttilde_j(n)\right)}{\ktilde}. &&(\text{concave})
\end{aligned}
\end{equation}}%

Now we establish the same bound for the term (\ref{ii}):
{\mathfontsize{}
\begin{align*}
    (\ref{ii}) &=  \mathbb{P}\!\left(
    \begin{aligned}
        \left\{\Ups - n \sum_{j=1}^{\ktilde} \ptilde_j \mutilde_j \geq n^{\eta}z\right\} \\
        \cap \bigcap_{j=1}^{\ktilde} \left\{\left|\ttilde_j(n)-n\ptilde_j\right|\leq \frac{n^\eta z}{2R\ktilde}\right\}
    \end{aligned}
    \right)  \\
    &\leq \mathbb{P}\!\left(\Ups -  \sum_{j=1}^{\ktilde} \ttilde_j(n) \mutilde_j + \ktilde \frac{n^\eta z }{2R\ktilde} R \geq n^{\eta}z\right) \romantag\label{ast1} \\
    &= \mathbb{P}\!\left(\Ups -  \sum_{j=1}^{\ktilde} \ttilde_j(n) \mutilde_j \geq \frac{n^{\eta}z}{2}\right) \\
    &\leq \mathbb{P}\!\left(\Ups -  \sum_{j=1}^{\ktilde} \ttilde_j(n) \mutilde_j \geq \sum_{j=1}^{\ktilde} \frac{\ttilde_j(n)^{\eta}z}{2\ktilde}\right)
            && (\text{by \cref{ineq1}}) \\
        &\leq \sum_{j=1}^{\ktilde} \mathbb{P}\!\left(\ttilde_j(n)\!\left(\ybar_{j, \ttilde_j(n)} - \mutilde_j\right)\! \geq  \frac{\ttilde_j(n)^{\eta}z}{2\ktilde}\right)
            &&(\text{Union Bound}) \\
    &\leq \ktilde \frac{\beta \ktilde^\xi 2^\xi }{z^\xi}
            &&(\text{Conc. of }\ybar_{j,\ttilde_j(n)}) \\
    &\leq 2\ktilde \frac{\beta 2^\xi R^\xi \ktilde^\xi}{z^\xi}.
\end{align*}}%
Here, the concentration of $\ybar_{j,\ttilde_j(n)}$ is directly applicable for $\frac{z}{2\ktilde} \geq 1$ and for $\frac{z}{2\ktilde} < 1$, it trivially follows from the right-hand side being greater than $1$, since then $\frac{(2\ktilde)^\xi}{z^\xi}>1$ and $\beta > 1$. Additionally inequality (\ref{ast1}) follows from
{\mathfontsize{}
\begin{alignat}{3}
    &&\left|\ttilde_j(n)-n\ptilde_j\right|&\leq \frac{n^\eta z}{2R\ktilde} \text{ and } \mutilde_j \in [-R,R], &&\forall j\in [\ktilde] \nonumber\\
    \Rightarrow&& -n\ptilde_j\mutilde_j &\leq -\ttilde_j(n)\mutilde_j + \frac{n^\eta z}{2R\ktilde} R,  &&\forall j\in [\ktilde] \\
    \Rightarrow&& - \sum_{j=1}^{\ktilde} n \ptilde_j \mutilde_j &\leq - \sum_{j=1}^{\ktilde} \ttilde_j(n) \mutilde_j + \ktilde \frac{n^\eta z }{2R\ktilde} R \nonumber .
\end{alignat}}%
By combining both bounds, we obtain
{\mathfontsize{}
\begin{equation}
     \mathbb{P}\!\left(n\ybar_n - n\mutilde \geq n^{\eta}z\right) \leq 2 \frac{\beta 2^{\xi+1}R^\xi \ktilde^{\xi+1}}{z^\xi} = \frac{\beta'}{z^{\xi'}},   
\end{equation}}%
where $\beta' := \beta\, 2^{\xi+2} R^\xi \ktilde^{\xi+1} > 1$ and $\xi' := \xi > 0$, $\frac{1}{2} \leq \eta':=\eta < 1$. The other direction follows analogously. \\

Note that in this case, we do not need to show convergence separately since convergence in expectation follows from convergence in probability.
\end{proof}

\begin{proof}[Proof of \Cref{thm:main}]
\hfill\newline\noindent
First, note that we can assume the same constants $\beta'$, $\xi'$, and $\eta'$ for the entire intermediate layer, i.e., $\forall i \in [K]$, since we can choose the largest $\beta'$, making every inequality hold. Thus, we let $\beta' := \beta\, 2^{\xi+2} R^\xi \max_{i\in[K]}(K_i)^{\xi+1}$. We get the desired result by applying \cref{thm:1,thm:fixedprob} successively.

By assumption, we have $\{X^i_{j,t}: t \in \mathbb{N}\}, \forall j \in [K_i], \forall i\in [K]$ satisfy the convergence and concentration properties for constants $\beta, \xi, \eta$. By application of \cref{thm:fixedprob}, it follows that $\{X_{i,t}: t \in \mathbb{N}\}, \forall i \in [K]$ satisfy the convergence and concentration properties for constants $\beta', \xi', \eta'$. Now \cref{thm:1} yields that 
$\{X_t: t \in \mathbb{N}\} $ satisfies the convergence and concentration properties for constants $ \beta'', \xi'', \eta''$, where the empirical average $\xbar_n$ concentrates around $\mu_{i^*}$, representing the converged empirical average of the best action with respect to the converged values $\{\mu_i:i\in[K]\}$.

The relationship between the constants at the root and the leaves can be summarized as follows:
{\mathfontsize{}
\begin{align}
    & \eta'' = \frac{\alpha}{\xi (1-\eta)} \label{eq:eta''}\\
    & \xi'' = \alpha - 1 \label{eq:xi''}\\
    & \beta'' = \max\!\left\{c_2, 2c_1^{\alpha-1} \max\!\left\{\beta', \frac{2(K-1)}{(\alpha-1)(1+A(N_p))^{\alpha-1}}\!\right\}\!\!\right\} \label{eq:beta''} \\
    & \beta' = \beta 2^{\xi+2}R^{\xi}\max_{i\in[K]}(K_i)^{\xi+1} 
\end{align}}%
Here, the constants $c_1, c_2, N_p$ and the function $A(t)$ were introduced by \citet{shah2020non} and are given by:%
{\mathfontsize{}%
\begin{align}%
    & c_1 = 2RK(\frac{2}{\Delta_{min}} \beta'^{\frac{1}{\xi}})^{\frac{1}{1-\eta}} \\
    & c_2 = 2R(N_p-1)^{1 - \frac{\alpha}{\xi(1-\eta)}} \\
    & A(t) = \left\lceil (\frac{2}{\Delta_{min}} \beta'^{\frac{1}{\xi}})^{\frac{1}{1-\eta}} t^{\frac{\alpha}{\xi (1-\eta)}} \right\rceil \\
    & \begin{aligned}
        N_p = & \min_{t \in \mathbb{N}} t \\
              & \;\begin{aligned}%
       \textrm{s.t.} \;& t \geq \max\{1, A(t)\} \\
                       & t \leq (2R(3 + A(t) - 4K))^\frac{1}{\eta}
              \end{aligned}%
    \end{aligned}%
\end{align}}%
\end{proof}

\subsection{Induction Over Layers}
\label{trees}
To apply the results of \cref{thm:main} to trees, we first need to show that the assumptions of \cref{thm:main} are satisfied at the leaf level of the tree, from which we can then inductively follow, by successive application of \cref{thm:main}, that the polynomial concentration is maintained throughout the entire tree up to the root. To this end, we observe the leaf level for the finite horizon $H$ as follows.   

\subsubsection{Induction Start}

At leaf level we observe \textbf{stationary} reward sequences $\{X^H_{i,t}\colon t\in \mathbb{N}\}$ that are independent and identically distributed (i.i.d.) for $i\in[K_H]$. Further, we assume the rewards are bounded within $[-R, R]$. We define their empirical mean of the first $n$ trials as $\xbar^H_{i,n}= \frac{1}{n} \sum_{t=1}^{n} X^H_{i,t}$ and its expectation $\mu^H_{i,n}:= E\left(\xbar^H_{i,n}\right)$. Then, we can state the following lemma:

\begin{lemma}
The reward sequences at the leaf level satisfy the convergence and concentration properties:
\begin{enumerate}
    \item (Convergence): For every $i\in [K_H]$ there exists $ \mu^H_i \in [-R,R]$ such that
    {\mathfontsize{}
    \begin{equation}
        \mu^H_{i,n}:=\mathop{\mathbb{E}}\left(\xbar^H_{i,n}\right)\xrightarrow{n\xrightarrow{}\infty} \mu^H_i
    \end{equation}}%
    \item (Concentration): There exist constants $\beta^H>1$, $\xi^H>0$, $\frac{1}{2} \leq \eta^H < 1$ such that for every $z\in \mathop{\mathbb{R}}$ with $z\geq 1$ and $n\in \mathop{\mathbb{N}}$ we have
    {\mathfontsize{}
    \begin{equation}
    \begin{aligned}    
        \mathbb{P}\!\left(n\xbar^H_{i,n} - n\mu^H_i \geq n^{\eta^H} z\right) \leq \frac{\beta^H}{z^{\xi^H}} \ ,\\
        \mathbb{P}\!\left(n\xbar^H_{i,n} - n\mu^H_i \leq -n^{\eta^H} z\right) \leq \frac{\beta^H}{z^{\xi^H}}.
    \end{aligned}
    \end{equation}}%
\end{enumerate}
\end{lemma}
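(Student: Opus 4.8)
\noindent\emph{Proof idea.} The plan is to handle the two claims separately; both reduce to elementary facts about i.i.d.\ bounded random variables, so no heavy machinery is needed.

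\emph{Convergence.} Since the leaf-level sequences are stationary and i.i.d., for every $i\in[K_H]$ and every $n\in\mathbb{N}$ we have $\mu^H_{i,n}=\mathbb{E}\!\left(\xbar^H_{i,n}\right)=\frac{1}{n}\sum_{t=1}^n\mathbb{E}\!\left(X^H_{i,t}\right)=\mathbb{E}\!\left(X^H_{i,1}\right)$. I would therefore simply define $\mu^H_i:=\mathbb{E}\!\left(X^H_{i,1}\right)$; then the sequence $(\mu^H_{i,n})_{n\in\mathbb{N}}$ is constant and hence trivially converges to $\mu^H_i$, and $\mu^H_i\in[-R,R]$ because $X^H_{i,1}\in[-R,R]$ almost surely.

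\emph{Concentration.} Here the plan is to fix $\eta^H=\frac{1}{2}$ and apply Hoeffding's inequality to the i.i.d.\ sequence $\{X^H_{i,t}-\mu^H_i\}_{t=1}^n$, whose summands take values in an interval of width $2R$. This yields
\[
\mathbb{P}\!\left(n\xbar^H_{i,n}-n\mu^H_i\geq n^{1/2}z\right)\leq\exp\!\left(-\frac{2\left(n^{1/2}z\right)^2}{n\left(2R\right)^2}\right)=\exp\!\left(-\frac{z^2}{2R^2}\right),
\]
where the point of taking $\eta^H=\frac{1}{2}$ is that the exponent becomes independent of $n$, so a single bound serves all $n\in\mathbb{N}$ at once. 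It then remains only to relax this exponential bound to the required polynomial form, exactly as is done in the proof of \cref{lemma}: for any finite $\xi^H>0$ the supremum $\sup_{z\geq1}z^{\xi^H}e^{-z^2/(2R^2)}$ is finite, so choosing $\beta^H>1$ at least this large gives $\exp(-z^2/(2R^2))\leq\beta^H/z^{\xi^H}$ for every $z\geq1$. The lower-tail estimate follows by applying the same argument to $\{-(X^H_{i,t}-\mu^H_i)\}_{t=1}^n$.

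I do not expect a genuine obstacle: this is Hoeffding's inequality followed by the same exponential-to-polynomial weakening already used for \cref{lemma}. The only subtlety worth flagging is that the concentration bound must hold uniformly in $n$, which is precisely why $\eta^H=\frac{1}{2}$ (rather than a larger exponent) is the natural choice; and it is worth recording that $\xi^H$ may be taken as large as desired at the cost of enlarging $\beta^H$, which is what later permits the parameter constraint $\xi^H\eta^H(1-\eta^H)\leq\alpha<\xi^H(1-\eta^H)$ with $\alpha>2$ to be met when \cref{thm:main} is applied inductively up the tree.
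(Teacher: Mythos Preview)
Your proposal is correct and follows essentially the same approach as the paper: Hoeffding's inequality for i.i.d.\ bounded variables followed by the exponential-to-polynomial weakening already used in \cref{lemma}. The only cosmetic difference is that you fix $\eta^H=\tfrac{1}{2}$, whereas the paper leaves $\eta^H\in[\tfrac{1}{2},1)$ arbitrary and uses $n^{2\eta^H-1}\geq 1$ to obtain uniformity in $n$; since the lemma only asserts existence of suitable constants, your specialization is entirely sufficient.
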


\begin{proof}
Since the reward sequences are i.i.d., we can apply Hoeffdings inequality similar to the proof of \cref{lemma} to obtain a stronger result, which we then weaken to be in the desired form. Note that convergence follows trivially from
{\mathfontsize{}
\begin{equation}
\begin{aligned}    
    \mu^H_{i,n} &= \mathbb{E}\!\left(\xbar^H_{i,n}\right) \\
    &= \frac{1}{n} \sum_{t=1}^n \mathbb{E}\!\left(X^H_{i,t}\right) &&(\text{lin.}) \\
    &=\mathbb{E}\left(X^H_{i,1}\right)  &&(\text{i.i.d.})\\
    &=: \mu^H_{i} , \forall n\in \mathbb{N}.
\end{aligned}
\end{equation}}%
Now for $z\in \mathop{\mathbb{R}}$ with $z\geq 1$ and $n\in \mathop{\mathbb{N}}$, by applying Hoeffdings inequality, we obtain 
{\mathfontsize{}
\begin{equation}
\begin{aligned}
    &\mathbb{P}\!\left(n\xbar^H_{i,n} - n\mu^H_i \geq n^{\eta^H} z\right) \\
    & \leq e^{- \frac{2n^{2\eta^H} z^2 }{n4R^2} } && (\text{Hoeffding: i.i.d. \& } X^H_{i,t}\in [-R,R]) \\
    & = \frac{1}{e^{ \frac{n^{(2\eta^H-1)} z^2 }{2R^2} } } \\
    &\leq \frac{1}{e^{\frac{z^2}{2R^2}} }  && (2\eta^H-1 \geq 0) \\
    &\leq \frac{\beta^H}{z^{\xi^H}}.  && \!\left(\forall \beta^H \geq \max_{z \geq 1}\frac{z^{\xi^H}}{e^{\frac{z^2}{2R^2}}}\!\right)
\end{aligned}
\end{equation}}%
Hence, for any finite $\xi^H > 0$, $\frac{1}{2} \leq \eta^H < 1$ we can choose $\beta^H > 1$ large enough that the desired inequality holds. Specifically, any $\beta^H \!\!\geq\!\! R^{\xi^H} (\xi^H)^{\frac{\xi^H}{2}} e^{-\frac{\xi^H}{2}}$ will suffice.

Note that for large enough $z$ any $\beta_T>1$ suffices, since $z^{\xi^H}= \mathcal{O}\!\left(\!e^{\frac{z^2}{2R^2}}\!\right)\!$ for any finite $\xi^H>0$.
\end{proof}

\subsubsection{Induction Step}
The induction step is precisely given by \cref{thm:main}. As an induction hypothesis, we assume that some layer $h \in \{2, \dots, H\}$ satisfies the convergence and concentration properties for constants $\beta^h, \xi^h, \eta^h$ from which it follows by \cref{thm:main} that the next layer $(h-1)$ also satisfies the properties for derived constants $\beta^{h-1}, \xi^{h-1}, \eta^{h-1}$ according to \cref{eq:eta'',eq:xi'',eq:beta''} and appropriate parameter $\alpha^h > 2$ with $\xi^h\eta^h\left(1-\eta^h\right) \leq \alpha^h < \xi^h \left(1-\eta^h\right)$ . Hence, for a finite horizon with bounded rewards, we can attain polynomial concentration and convergence throughout the whole tree up to the root of the search. Note that this inductive argument is equivalent to the deterministic case \cite{shah2020non}.


\section{Experimental Details}
For our small motivational experiment, we used the FrozenLake environment, which is explained in the following subsection, with $n \in \{2^{10}, 2^{11}, 2^{12}, 2^{13}, 2^{14}\}$ over 300 random seeds. The results can be seen in \cref{fig:exp}. It displays the average discounted return observed over the different seeds, including the standard error, against the number of simulations $n$.
Further, \cref{fig:regret} shows the regret of MCTS for various $n$ in state $14$ (\cref{fig:frozenlake} left of the present) and 10 random seeds. This plot is a more direct visualization of \cref{thm:main}.
Details about hyperparameters and the computing infrastructure are given later in this section.

\subsection{The FrozenLake Environment}
The FrozenLake environment is a Gymnasium environment \cite{towers2024gymnasium}.
It offers four actions in every state: Left, Down, Right, and Up.
With a probability of $\nicefrac{1}{3}$, the agent transitions as intended. Otherwise, they will move in one of the two perpendicular directions, each with a probability of $\nicefrac{1}{3}$. The goal for the player is to reach the goal state in the lower right corner (\cref{fig:frozenlake}, present) to get a reward of $1$, and the episode terminates. It always starts in the upper left corner. If it falls into a hole, the episode also terminates, with a reward of $0$. All intermediate rewards are $0$ as well. We use the $4 \times 4$ map and allow $T = 400$ time steps until the episode terminates, and a discount factor $\gamma = 0.99$.
\begin{figure}[tb]
    \centering
    \includegraphics[width=0.35\columnwidth]{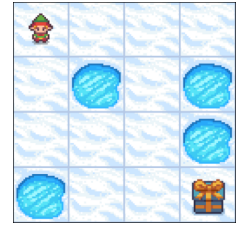}
    \caption{A screenshot of the FrozenLake environment showing the $4 \times 4$ map with the player at the start state, goal (present) and 4 holes.}
    \label{fig:frozenlake}
\end{figure}

\subsection{Hyperparameters}
Computing the exact constants is time-consuming in practice. Therefore, we chose to follow the path of \citet{shah2020non} ($\eta = \nicefrac{1}{2}$ and $\frac{\alpha}{\xi} = \nicefrac{1}{4}$). Further, we selected $\beta^{\frac{1}{\xi}} = 2$ so that
we get an exploration bonus of
{\mathfontsize{}%
\begin{equation}
    B_{t,s} = 2 \sqrt{\frac{\sqrt{t}}{s}}. 
\end{equation}}%
We strongly believe this is sufficient to show the practical value of our theoretical contribution.

\subsection{Computational Infrastructure}
Our experiments were conducted on Google Colab\footnote{\url{https://colab.research.google.com}} using the Python 3 CPU runtime.

\end{document}